\newcommand{\Ind}{\mathds{1}}
\DeclareFontFamily{U}{mathx}{\hyphenchar\font45}
\DeclareFontShape{U}{mathx}{m}{n}{
      <5> <6> <7> <8> <9> <10>
      <10.95> <12> <14.4> <17.28> <20.74> <24.88>
      mathx10
      }{}
\DeclareSymbolFont{mathx}{U}{mathx}{m}{n}
\DeclareMathSymbol{\bigtimes}{1}{mathx}{"91}
\definecolor{DarkRed}{rgb}{0.1,0.1,0.8}
\definecolor{DarkBlue}{rgb}{0.1,0.1,0.5}
\definecolor{ForestGreen}{rgb}{0.1333,0.5451,0.1333}
\definecolor{DarkRed}{rgb}{0.8,0,0.4}
\definecolor{Red}{rgb}{0.8,0,0.4}
\crefname{property}{property}{Property}
\crefname{equation}{eq}{Eq}
\def\BState{\State\hskip-\ALG@thistlm}
\newcolumntype{P}[1]{>{\RaggedRight\hspace{0pt}}p{#1}}
\newcolumntype{L}{>{\arraybackslash}m{3cm}}
\newcolumntype{q}{>{\arraybackslash}m{4.5cm}}
\definecolor{aliceblue}{rgb}{0.94, 0.97, 1.0}
\definecolor{blizzardblue}{rgb}{0.67, 0.9, 0.93}
\newtheorem{lemma}{Lemma}[section]
\newtheorem{proposition}[lemma]{Proposition}
\newtheorem{theorem}[lemma]{Theorem}
\newtheorem{claim}[lemma]{Claim}
\newtheorem{definition}[lemma]{Definition}
\newtheorem*{claim*}{Claim}
\newtheorem*{proposition*}{Proposition}
\newtheorem*{lemma*}{Lemma}
\newtheorem*{problem*}{Problem}
\crefname{lemma}{Lemma}{Lemmas}
\crefname{note}{Note}{Notes}
\crefname{claim}{Claim}{Claims}
\newtheorem{remark}[lemma]{Remark}
\newtheorem{example}[lemma]{Example}
\newtheorem{openquestion}[lemma]{Open Question}
\newtheorem{mdresult}{Result}
\newtheoremstyle{restate}{}{}{\itshape}{}{\bfseries}{~(restated).}{.5em}{\thmnote{#3}}
\theoremstyle{restate}
\theoremstyle{definition}
\newtheorem{mdalg}{Algorithm}
\renewcommand{\qed}{\nobreak \ifvmode \relax \else
      \ifdim\lastskip<1.5em \hskip-\lastskip
      \hskip1.5em plus0em minus0.5em \fi \nobreak
      \vrule height0.75em width0.5em depth0.25em\fi}
\newcommand{\eps}{\ensuremath{\varepsilon}}
\newcommand{\R}{\ensuremath{\mathbb{R}}}
\newcommand{\poly}{\mbox{\rm poly}}
\DeclareMathOperator*{\Exp}{\ensuremath{{\mathbb{E}}}}
\DeclareMathOperator*{\Prob}{\ensuremath{\mathbb{P}}}
\renewcommand{\Pr}{\Prob}
\newcommand{\Ex}{\Exp}
\newenvironment{tbox}{\begin{tcolorbox}[
		enlarge top by=5pt,
		enlarge bottom by=5pt,
		 breakable,
		 boxsep=0pt,
                  left=4pt,
                  right=4pt,
                  top=10pt,
                  arc=0pt,
                  boxrule=1pt,toprule=1pt,
                  colback=white
                  ]%
	}
{\end{tcolorbox}}
\newcommand{\II}{\ensuremath{\mathbb{I}}}
\newcommand{\mireal}[1][]{
  \ifx\relax#1\relax%
    \II(\mione \,; \mitwo)%
  \else%
    \II(\mione \,; \mitwo\mid #1)%
  \fi
}
\newcommand{\cD}{\mathcal{D}}
\newcommand{\cH}{\mathcal{H}}
\newcommand{\mcD}{\mathcal D}
\newcommand{\mcH}{\mathcal H}
\newcommand{\mcX}{\mathcal X}
\DeclareMathOperator{\cone}{Cone}
\newcommand{\hs}[1]{s_{#1}}
\title{\textbf{Proofs as Explanations: Short Certificates for Reliable Predictions}  
}
\author{
{\bf Avrim Blum}\thanks{Toyota Technological Institute at Chicago. \texttt{avrim@ttic.edu}} \quad
{\bf Steve Hanneke}\thanks{Purdue University. \texttt{steve.hanneke@gmail.com}} \quad
{\bf Chirag Pabbaraju}\thanks{Stanford University. \texttt{cpabbara@stanford.edu}} \quad
{\bf Donya Saless}\thanks{Toyota Technological Institute at Chicago. \texttt{donya@ttic.edu}}
}
\date{\today}
\begin{document}
\vspace{-3cm}
\maketitle
\vspace{-0.5cm}
{\renewcommand\thefootnote{}\footnotetext{Authors listed in alphabetical order.}%
}
\begin{abstract}
We consider a model for explainable AI in which an explanation for a prediction $h(x)=y$ consists of a subset $S'$ of the training data (if it exists) such that {\em all} classifiers $h' \in \cH$ that make at most  $b$ mistakes on $S'$ predict $h'(x)=y$.   Such a set $S'$ serves as a {\em proof} that $x$ indeed has label $y$ under the assumption that (1) the true target function $h^\star$ belongs to $\cH$, and (2) the set $S$ contains at most $b$ noisy or corrupted points.   For example, if $b=0$ and $\cH$ is the family of linear classifiers in $\mathbb{R}^d$, and if $x$ lies inside the convex hull of the positive data points in $S$ (and therefore every consistent linear classifier labels $x$ as positive), then Carath\'eodory's theorem states that $x$ in fact lies inside the convex hull of $d+1$ of those points.  So, a set $S'$ of size $d+1$ could be released as an explanation for a positive prediction, and would serve as a short proof of correctness of the prediction under the assumption of perfect realizability.  
In this work, we consider this problem more generally, for general hypothesis classes $\cH$ and general values $b\geq 0$.  We define the notion of the {\em robust hollow star number} of $\cH$ (which generalizes the standard hollow star number), and show that it precisely characterizes the worst-case size of the smallest certificate achievable, and analyze its size for natural classes. We also consider worst-case distributional bounds on certificate size, as well as {\em distribution-dependent} bounds that we show tightly control the sample size needed to get a certificate for any given test example.  In particular, we define a notion of the {\em certificate coefficient} $\eps_x$ of an example $x$ with respect to a data distribution $\cD$ and target function $h^\star$, and prove matching upper and lower bounds on sample size as a function of $\eps_x$, $b$, and the VC dimension $d$ of $\cH$.
\end{abstract}

\vspace{0.3cm}
\section{Introduction}

There has been substantial recent interest in {\em explainable} AI, \citep{ALI2023101805, 10.1145/3561048, doshivelez2017rigorousscienceinterpretablemachine, ribeiro2016whyitrustyou, miller2018explanationartificialintelligenceinsights}. For example, in a medical setting, if a classifier $h \in \cH$ trained on some large dataset $S$ predicts that patient $x$ should get treatment $y$, the patient's doctor may want an explanation of why.  Much of the work in explainable machine learning has focused on decision-tree models, or identifying the most salient features for the prediction made 
\citep{NIPS1995_45f31d16, Breiman1996BORNAT, Zhou2016InterpretingMV}.  In this work, we consider an alternative approach: outputting a subset $S'$ of the training set $S$ such that {\em all} classifiers $h' \in \cH$ that agree with $S'$ (or that make at most $b$ mistakes on $S'$) predict $h'(x)=y$, if such an $S'$ exists.  Such a set $S'$ would serve as a {\em proof} that $x$ indeed has label $y$ under the assumption that (1) the true target function $h^\star$ belongs to $\cH$, and (2) the set $S$ contains at most $b$ noisy or corrupted points.   For example, if $b=0$ and $\cH$ is the family of linear classifiers in $\mathbb{R}^d$, and if $x$ lies inside the convex hull of the positive data points in $S$ (and hence every consistent linear classifier labels $x$ as positive), then Carath\'eodory's theorem states that $x$ in fact lies inside the convex hull of $d+1$ of those points.  So, a set $S'$ of size $d+1$ could be released as an explanation for a positive prediction, and a proof of its correctness under the assumption of realizability. 

We aim to consider such explanations for general families $\cH$ and general values of $b$.
Our work is inspired by \cite{balcan2022robustly} who propose the notion of {\em robustly reliable} classifiers that, given an example $x$, output both a label $y$ and a value $b$ with the guarantee that any $h'\in\cH$ with $h'(x)\neq y$ makes strictly more than $b$ mistakes on $S$ (where $b<0$ if $x$ is not in the agreement region of the version space).  Our work can be viewed as investigating the shortest {\em proof} that can be provided for such a guarantee.

\subsection{Main Contributions}
\label{sec:main-contributions}
Our main contributions are the following:
\begin{enumerate}
    \item We formalize the notion of a robust certificate: a subset $S'$ of the training data that serves as a proof that a given example $x$ must have label $y$ if the target function belongs to a given class $\cH$ and the training set has at most $b$ noisy or corrupted points.
    To analyze this, we define the notion of the {\em robust hollow star number} of $\cH$, which generalizes the standard hollow star number \citep{bousquet2020proper}, and show that it precisely characterizes the worst-case size of the smallest certificate achievable for a class $\cH$, and analyze its size for natural classes.

    \item We examine worst-case distributional bounds on certificate size, showing that in this case, one can achieve tight bounds on certificates achievable from a finite sample in terms of the (standard) hollow star number of \cite{bousquet2020proper}.
    
    \item We also consider {\em distribution-dependent} bounds on the sample size needed to get a certificate for any given test example in terms of how ``close'' the example is to the ``boundary'' of the target function with respect to the  distribution $\cD$ and class $\cH$.  In particular, we define a notion of the {\em certificate coefficient} $\eps_x$ of an example $x$ with respect to a data distribution $\cD$ and target function $h^\star$, and prove matching upper and lower bounds on sample size as a function of $\eps_x$, $b$, and the VC-dimension $d$ of the class $\cH$.
    
    \item We examine how reweighted versions of the certificate coefficient can provide better bounds on the certificate size achievable from a polynomial-sized data sample.

\end{enumerate}

\subsection{Context and Related Work}
Explainable ML research largely focuses on decision trees or key predictive features. In these approaches, a certificate for an instance \( x \) corresponds to the root-to-leaf path of \( x \) in the tree. A widely studied method for explaining a black-box model \( f \) involves first learning a decision tree \( T \) that closely approximates \( f \). Once this surrogate decision tree \( T \) is obtained, a certificate for any instance \( x \) can be generated by retrieving its associated path in \( T \) \citep{NIPS1995_45f31d16, Breiman1996BORNAT, Zhou2016InterpretingMV}.
 \cite{blanc2021provablyefficientsuccinctprecise} proposed algorithms for implicitly learning the surrogate decision trees that approximate the target function, with provable performance guarantees under the uniform distribution.
 \cite{Ribeiro_Singh_Guestrin_2018} were the first to introduce certificates (which they term anchors) providing high precision by identifying a minimal set of rules that \emph{anchor} a prediction, ensuring that the output remains stable under small perturbations. They provide an efficient heuristic, based on greedy search, for finding such high-precision certificates. 
 \cite{blanc2022query} investigates the minimum number of queries required to certify a monotone function's prediction at a given point. They define as certificate a subset of input coordinates such that fixing these guarantees the function's value remains unchanged; \cite{gupta2022optimalalgorithmcertifyingmonotone} further investigate this.
 There is also an increasing interest in hybrid models that are partially interpretable as studied by \cite{ferry2023learninghybridinterpretablemodels, pmlr-v237-frost24a}.

Another related line of work considers learning guarantees in the face of malicious noise \citep{kearns1988learning,balcan2021noise}, such as the breakthrough work of \cite{awasthi2017power} on guarantees for learning linear separators under malicious noise, as well as work on instance-targeted data-poisoning attacks \citep{10.1145/1128817.1128824}. \cite{217486} and \cite{shafahi2018poisonfrogstargetedcleanlabel} demonstrated empirically that data-poisoning attacks can be highly effective, even when the adversary only adds correctly-labeled data to the training set. These targeted attacks have attracted attention in recent years due to their potential to compromise the trustworthiness of systems \citep{geiping2021witchesbrewindustrialscale, 6868201mozafari, chen2017targetedbackdoorattacksdeep}.

A key concern here is: when can predictions be trusted under such attacks? 
Most theoretical work on this question has focused on certifying stability of predictions under small changes to the training set \citep{gao2021learning,levine2021deep}.  However, recently,  \cite{balcan2022robustly, balcan2023reliablelearningchallengingenvironments, blum2024regularized} consider certifying the actual correctness of predictions, under assumptions of realizability and bounded adversarial power. 
Our work can be viewed as studying the shortest \textit{proof} that can be provided for such a guarantee.
 
\section{Preliminaries}
\label{sec:preliminaries}

\subsection{Notation}
\label{sec:notation}

The input space is denoted \( \mathcal{X} \), and the label space \( \{+1, -1\} \). A hypothesis class $\mathcal{H}$ is a subset of $\{-1,+1\}^\mathcal{X}$. A sequence $S=\{(x_1,y_1),\dots,(x_n,y_n)\}$ is realizable by $\mcH$ if $\exists h \in \mcH$ such that $\sum_{i=1}^n\Ind[h(x_i)\neq y_i] = 0$. We use \( [n] \) to denote the first \( n \) natural numbers, $[n] = \{1, 2, \dots, n\}$.

\subsection{Formal Setting}
\label{sec:setting}

The primary subject in this work is the notion of a \textit{certificate} for predictions on test points. The certificates we consider are in terms of subsets of the training data. Concretely, suppose that we are given a dataset $S=\{(x_1,y_1),\ldots,(x_n,y_n)\}$ labeled by some unknown target function $h^\star \in \mcH$ (i.e., $y_i=h^\star(x_i), \forall i$).
An ideal scenario would be that there are no corruptions whatsoever in the labels, so that $S$ is completely realizable by $h^\star$. However, in practice, label corruptions are inevitable due to a variety of reasons like noisy measurements, human errors, etc. To account for this, we allow for a corruption budget $b \ge 0$. This leads to the following definition of a \textit{robustly realizable dataset}.

\begin{definition}[Robust Realizability]
    \label{def:robust-realizability}
    For a budget $b \ge 0$, a sequence $S=\{(x_1,y_1),\dots,(x_n,y_n)\}$ is $b$-robustly realizable by $\mcH$ if $\exists h \in \mcH$ such that $\sum_{i=1}^n\Ind[h(x_i)\neq y_i] \le b$.
\end{definition}

Given a $b$-robustly realizable dataset $S$, we wish to certify that the prediction on a given test point $x$ ought to be $y$, and we wish to frame this certificate in terms of a subset of $S$ that is ideally small. For this, we require the notion of an \textit{agreement} region.

\begin{definition}[Robust Agreement Region]
    \label{def:robust-agreement-region}
    A point $(x,y)$ is in the $b$-robust agreement region of a labeled sequence $S=\{(x_1,y_1),\dots,(x_n,y_n)\}$ if 
    \begin{align}
        \forall h \in \mcH: \sum_{i=1}^n\Ind[h(x_i)\neq y_i] \le b \implies h(x) = y.
    \end{align}
    When $b=0$, we refer to the $0$-robust agreement region simply as the agreement region.
\end{definition}
We are now ready to define our notion of robust certificates.

\begin{definition}[Robust Certificate]
    \label{def:robust-certificate}
    A sequence $S=\{(x_1,y_1),\dots,(x_n,y_n)\}$ is a $b$-robust certificate for $(x,y)$ if 
    \begin{enumerate}
        \item $S$ is $b$-robustly realizable by $\mcH$.
        \item $(x,y)$ is in the $b$-robust agreement region of $S$.
    \end{enumerate}
    $S$ is furthermore a minimal $b$-robust certificate for $(x,y)$ if $S$ is a $b$-robust certificate for $(x,y)$, and no proper subsequence $S' \subset S$ is a $b$-robust certificate for $(x,y)$.
\end{definition}

Our setting assumes that we are given a $b$-robustly realizable training dataset $S$, together with a test point $x$ which satisfies that $(x,y)$ belongs to the $b$-robust agreement region of $S$, for some $y \in \{+1,-1\}$. Namely, $S$ is itself a $b$-robust certificate for $(x,y)$.\footnote{The problem of obtaining small-size subsets of the training data that can serve as good-faith certificates for test-time predictions only makes sense if the test point is in the agreement region of the training data to start with.} Our primary objective is to analyze and obtain the smallest $S' \subseteq S$ such that $S'$ continues to be a $b$-robust certificate for $(x,y)$.
\section{Worst Case Bounds on Certificate Size}
\label{sec:worst-case}

The motivating question for this section is: what is the smallest certificate for $x$ that we can extract from $S$, and can a certificate of such size always be extracted, even for \textit{worst-case} instances of $S$ and $x$? We begin with two simple examples for certification  in the case of no corruptions (i.e., $b=0$).

\begin{example}[Halfspaces]
    \label{example:halfspaces}
    Consider the class of $d$-dimensional halfspaces passing through the origin, i.e., $\mcX = \R^d$, $\mcH = \{x \mapsto \operatorname{sign}(w^Tx):w \in \R^d\}$. Suppose we are given a training dataset $S=\{(x_1,y_1),\ldots,(x_n,y_n)\}$ realizable by $\mcH$. Let $S_{+}=\{x_i: i \in [n], y_i = +1\}$, $S_-=\{-x_i:i \in [n], y_i=-1\}$, and let $S_+ \cup S_-=\{z_1,\ldots,z_n\}$ for convenience. Note that we negate a negatively labeled $x_i$ in $S$ to form $S_-$. A test point $(x,+1)$ belongs to the agreement region of $S$ if and only if $x \in \cone(S_+ \cup S_-)$, where $\cone(\cdot)$ denotes the conic hull. To see this, suppose first that $x \in \cone(S_+ \cup S_-)$. This means that $x=\sum_{i=1}^n \alpha_i z_i$, where $\alpha_i \ge 0, \forall i$. Let $w \in \R^d$ represent any halfspace that labels all the examples in $S$ correctly. Then, observe that for every $z_i \in S_+ \cup S_-$, $w^Tz_i \ge 0$. But this also means that $w^Tx = \sum_{i=1}^n\alpha_i \cdot w^Tz_i \ge 0$. Thus, $(x, +1)$ belongs to the agreement region of $S$. 
    In the other direction, suppose
    that $x \notin \cone(S_+ \cup S_-)$. Then, because $\cone(S_+ \cup S_-)$ is a closed convex set, the separating hyperplane theorem (e.g., see Theorem 1 in \cite{separatinghyperplane}) implies the existence of $w \in \R^d$, such that $w^Tx < 0$, and $w^Ty \ge 0$ for $y \in \cone(S_+ \cup S_-)$. In particular, 
    this means that $w$ labels all examples in $S$ correctly. 
    However, $w^T x < 0$, so $(x, +1)$ is not in the agreement region of $S$.
    
    So, consider a test point $(x, +1)$ that belongs to the agreement region of $S$. By the preceding argument, $x \in \cone(S_+ \cup S_-)$. But then, by Carath\'eodory's Theorem, $x$ can be written as a conic combination of at most $d$ points\footnote{There are hardness results on computing the ``Carath\'eodory number'' of a point set \citep{bereg2020computing}.} from $S_+ \cup S_-$, which implies that $(x, +1)$ is in the agreement region of these points. Thus, we can use this subset of $S_+ \cup S_-$ (together with labels), which has size at most $d$, as a certificate for $(x, +1)$.
\end{example}

Note that in the above example, the size of the training data could be much larger than the ambient dimension, i.e., $n \gg d$. Even so, as long as the test point belongs to the agreement region of the data, it is possible to obtain a certificate of size at most $d$. One might observe that the VC dimension of $d$-dimensional halfspaces passing through the origin is $d$. Given how predominantly the VC dimension features in the characterization of learning-theoretic properties of binary hypothesis classes, one might wonder if the VC dimension also characterizes certificate size. That is, could it be possible to always extract an $O(VC(\mcH))$-size certificate from the training data, whenever the given test point belongs to its agreement region? Our next example shows that this is not the case.

\begin{example}[Singletons]
    \label{example:singletons}
    Consider the class of singletons on a domain of size $n$, i.e., $\mcX = [n]$, $\mcH = \{x \mapsto (-1)^{1-\Ind[x=i]}: i \in [n]\}$. The VC dimension of this class is 1. Suppose that we are given a training dataset $S=\{(1,-1), (2,-1),\ldots,(n-1, -1)\}$. Observe that the test point $(n, +1)$ is in the agreement region of $S$, because the only hypothesis in $\mcH$ that labels $S$ in the given way labels $n$ positively. Note however, that the test point is not in the agreement region of any proper subset of $S$. We must therefore necessarily provide all the $n-1$ points in $S$ to certify a positive label on $n$.
\end{example}
    
The instance in the above example has a curious property: as long as there remains a single point in the domain whose label we haven't observed, it is not possible to completely determine the label of the test point. 
This, however, is the defining property of another combinatorial quantity in learning theory, known as the \textit{hollow star number} \citep{bousquet2020proper}. For example, the hollow star number is known to lower bound the sample complexity of proper PAC learners \citep[Theorem 10]{bousquet2020proper}. 
Indeed, as we show ahead, the hollow star number ends up also being the relevant quantity that characterizes worst-case minimum certificate sizes. First, we define a slightly more generalized version of the hollow star number, which allows us to handle $b \ge 0$ corruptions.
\begin{definition}[Robust Realizability with Weights]
    \label{def:robust-realizability-wrt-weights}
    A weighted sequence $S=\{(x_i,y_i,w_i)\}_{i=1}^n$ is $b$-robustly realizable by $\mcH$ if 
    $\exists h \in \mcH \text{ such that }\sum_{i=1}^nw_i \cdot \Ind[h(x_i)\neq y_i] \le b.$
\end{definition}
\begin{definition}[Robust Hollow Star]
    \label{def:robust-hollow-star}
    A sequence $T=\{(x_1,y_1),\dots,(x_{\hs{b}},y_{\hs{b}})\}$ is a $b$-robust hollow star for $\mcH$, if there exist integer weights $w_1,\dots,w_{\hs{b}}$ and $i^* \in [\hs{b}]$ such that:
    \begin{enumerate}
        \item $w_i = 1$ for all $i \in [\hs{b}] \setminus \{i^*\}$.
        \item $w_{i^*} = b+1$.
        \item The weighted sequence $T=\{(x_i,y_i,w_i)\}_{i=1}^{s_b}$ is not $b$-robustly realizable by $\mcH$.
        \item For any $i \in [\hs{b}]$, $T \setminus \{(x_i,y_i,w_i)\}$ is $b$-robustly realizable by $\mcH$.
    \end{enumerate}
    The size $\hs{b}$ of the largest $b$-robust hollow star is the $b$-robust hollow star number of $\mcH$.
\end{definition}

\begin{remark}
    We observe that setting $b=0$ in the above definition recovers the standard definition (e.g., Definition 3 in \cite{bousquet2020proper}) of the hollow star number. We also note that repeats are allowed in the sequence $T$ (i.e., there might be $i \neq j$ where $x_i=x_j$).
\end{remark}

The following claim (proof in \Cref{sec:proof-b-robust-lower-bounded-by-hollow-star}) lower bounds the $b$-robust hollow star number in terms of the 0-robust hollow star number (referred to hereon simply as the hollow star number). 

\begin{claim}
    \label{claim:b-robust-lower-bounded-by-hollow-star}
    For $b \ge 0$, let $\hs{b}$ be the $b$-robust hollow star number of $\mcH$. Then, $\hs{b} \ge (b+1)(s_0-1)+1$.
\end{claim}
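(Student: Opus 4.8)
The plan is to exhibit an explicit $b$-robust hollow star of size exactly $(b+1)(s_0-1)+1$, obtained by ``inflating'' a maximum ordinary hollow star. Fix a $0$-robust hollow star $T=\{(x_1,y_1),\dots,(x_{s_0},y_{s_0})\}$ of maximum size $s_0$; by \Cref{def:robust-hollow-star} with $b=0$, the sequence $T$ is not realizable by $\mcH$, while $T\setminus\{(x_j,y_j)\}$ is realizable for every $j\in[s_0]$. Designate the last index $s_0$ as the ``heavy'' coordinate $i^*$, and build a weighted sequence $T'$ as follows: for each $j\in[s_0-1]$ include $b+1$ copies of $(x_j,y_j)$, each with weight $1$; and include a single copy of $(x_{s_0},y_{s_0})$ with weight $b+1$. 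Then $T'$ has $(b+1)(s_0-1)+1$ points, and conditions (1)--(2) of \Cref{def:robust-hollow-star} hold by construction (only the single heavy copy has weight $b+1$, all other points have weight $1$).

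The first thing to check is condition (3): $T'$ is not $b$-robustly realizable. The key observation is that the weighted number of mistakes any $h\in\mcH$ makes on $T'$ equals $(b+1)$ times the number of original indices $j\in[s_0]$ on which $h$ disagrees with $(x_j,y_j)$ — because all $b+1$ copies of a given point share the same label, so $h$ either misses the entire block of copies or none of it, paying $b+1$ or $0$ per block, and it pays $b+1$ or $0$ on the heavy point as well. Since $T$ is not realizable, every $h\in\mcH$ disagrees with at least one of the $s_0$ original points, so its weighted mistake count on $T'$ is at least $b+1>b$.

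Next I would verify the minimality condition (4): deleting any single element of $T'$ yields a $b$-robustly realizable weighted sequence. If the deleted element is one of the $b+1$ unit-weight copies of $(x_j,y_j)$ for some $j\le s_0-1$, then $b$ copies of that point survive; take the hypothesis $h$ witnessing realizability of $T\setminus\{(x_j,y_j)\}$, which agrees with every original point except possibly $x_j$, so its weighted mistakes on the reduced sequence are at most $b\cdot 1=b$. If instead the deleted element is the heavy copy of $(x_{s_0},y_{s_0})$, take the hypothesis witnessing realizability of $T\setminus\{(x_{s_0},y_{s_0})\}$: it agrees with $(x_j,y_j)$ for all $j\le s_0-1$, hence makes $0$ weighted mistakes on what remains. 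Either way the reduced sequence is $b$-robustly realizable, so $T'$ is a $b$-robust hollow star and $s_b\ge |T'|=(b+1)(s_0-1)+1$.

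I do not anticipate a genuine obstacle; the only care needed is the bookkeeping of weighted mistakes under duplication — precisely the observation that a hypothesis pays $b+1$ or $0$ on each block of identical copies — and handling the degenerate range $s_0\le 1$, where the bound reads $s_b\ge 1$ and is witnessed by the single heavy point $(x_1,y_1,b+1)$ whenever $\mcH$ admits any hollow star at all.
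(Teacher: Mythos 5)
Your proof is correct and follows essentially the same route as the paper: inflate the ordinary hollow star by replacing each of the first $s_0-1$ points with $b+1$ unit-weight copies and keeping a single heavy copy of the last point, then check conditions (3) and (4) using non-realizability of the full hollow star and realizability of each leave-one-out subsequence. The one stylistic difference is that you package the verification of condition (3) as the clean observation that the weighted mistake count is $(b+1)$ times the number of original indices on which $h$ errs, whereas the paper does a two-case split on whether $h$ agrees at $x_{s_0}$; both are the same calculation.
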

The $b$-robust hollow star number \textit{exactly} characterizes the smallest size of a reliable certificate.

\begin{theorem}[Robust Hollow Star Characterizes Minimum Certificate Size]
    \label{theorem:b-robust-hollow-star}
    Let $\mcH$ be a hypothesis class that has $b$-robust hollow star number $\hs{b}$, $S$ be a training dataset that is $b$-robustly realizable by $\mcH$, and $x$ be a test point such that for some $y \in \{-1,1\}$, $(x,y)$ is in the $b$-robust agreement region of $S$. Then, there exists a $b$-robust certificate $S' \subseteq S$ for $(x,y)$ that has size at most $\hs{b}-1$. Furthermore, there exists a training dataset $S$ of size $\hs{b}-1$ that is $b$-robustly realizable by $\mcH$, test point $x$ and test label $y \in \{-1,1\}$ which satisfy that $(x,y)$ belongs to the $b$-robust agreement region of $S$, such that no proper subsequence of $S$ is a $b$-robust certificate for $(x,y)$.
\end{theorem}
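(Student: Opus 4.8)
The plan is to prove the two halves separately, the upper bound (extract a small certificate from any $S$) and the lower bound (exhibit a bad $S$ of size exactly $\hs{b}-1$), with the connection to $b$-robust hollow stars being the engine in both directions.

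For the \textbf{upper bound}, I would argue by contradiction on the size of a minimal certificate. Since $S$ is itself a $b$-robust certificate for $(x,y)$, it contains a minimal $b$-robust certificate $S' \subseteq S$ (just keep deleting points while the two defining properties survive). Suppose toward contradiction that $|S'| \ge \hs{b}$. The key idea is to build a $b$-robust hollow star of size $> \hs{b}$, contradicting maximality. Consider the sequence $T$ obtained by appending to $S'$ the point $(x,-y)$ with weight $b+1$ (and weight $1$ on all points of $S'$). I claim $T$ witnesses a $b$-robust hollow star with $i^*$ the index of $(x,-y)$: property (3), that $T$ is not $b$-robustly realizable, is exactly the statement that $(x,y)$ is in the $b$-robust agreement region of $S'$ (any $h$ making $\le b$ weighted mistakes on $T$ must make $\le b$ mistakes on $S'$ and hence satisfy $h(x)=y$, so it makes $b+1$ weighted mistakes on $(x,-y)$ — contradiction); property (4) for the index $i^*$ is the $b$-robust realizability of $S'$; and property (4) for the other indices is minimality of $S'$ — removing $(x_j,y_j)$ from $S'$ destroys the certificate, but since $S'\setminus\{(x_j,y_j)\}$ is still $b$-robustly realizable (realizability only gets easier when removing points), it must be that $(x,y)$ left the agreement region, i.e.\ some $h$ makes $\le b$ mistakes on $S'\setminus\{(x_j,y_j)\}$ with $h(x)\ne y$, and that same $h$ makes $\le b$ weighted mistakes on $T\setminus\{(x_j,y_j,1)\}$. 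So $T$ is a $b$-robust hollow star of size $|S'|+1 \ge \hs{b}+1$, contradiction. Hence $|S'|\le \hs{b}-1$. (One should double-check the edge case where $(x,-y)$ coincides with a point already in $S'$; repeats are allowed in hollow stars, and the weight on that copy can just be taken as $b+1$, so this is fine — but I'd state it carefully.)

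For the \textbf{lower bound}, take a $b$-robust hollow star $T=\{(x_1,y_1),\dots,(x_{\hs{b}},y_{\hs b})\}$ of maximum size $\hs{b}$, with weights $w_i$ and special index $i^*$ as in Definition~\ref{def:robust-hollow-star}. Set $x := x_{i^*}$, $y := y_{i^*}$, and $S := T \setminus \{(x_{i^*},y_{i^*})\}$, a sequence of $\hs{b}-1$ points each with weight $1$. Then $S$ is $b$-robustly realizable (property (4) applied to $i^*$). I must check $(x,y)$ is in the $b$-robust agreement region of $S$: if some $h$ made $\le b$ mistakes on $S$ but had $h(x)\ne y$, then on the weighted sequence $T$ it would make at most $b + 0 \cdot w_{i^*}$... wait — $h(x_{i^*})\ne y_{i^*}$ contributes $w_{i^*}=b+1$, which is too much. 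So instead: since $T$ is not $b$-robustly realizable (property (3)) but removing any single $(x_i,y_i,w_i)$ makes it so, and in particular for $i = i^*$ there is $h^\star_{i^*}$ making $\le b$ weighted mistakes on $T\setminus\{(x_{i^*},\cdot,\cdot)\} = S$; I use this $h^\star$ as the target labeling $S$ — here I need $h^\star(x_{i^*}) = y_{i^*}$, which follows because otherwise $h^\star$ would witness $b$-robust realizability of all of $T$ (it makes $\le b$ mistakes on $S$ and $0$ on the copy of $(x_{i^*},y_{i^*})$), contradicting (3). Good — so $y$ is consistent with a legitimate target. For agreement: suppose $h$ makes $\le b$ mistakes on $S$ with $h(x)\ne y$; then $h$ makes $\le b$ weighted mistakes on $S$ and $(b+1)$ on $(x_{i^*},y_{i^*})$... no wait, that means $h$ makes $> b$ on $T$, which is consistent, not a contradiction. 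I need the opposite direction. The right argument: I want no proper subsequence of $S$ to be a $b$-robust certificate, \emph{not} that $S$ itself is one in the agreement sense — re-reading the theorem statement, it says $(x,y)$ belongs to the $b$-robust agreement region of $S$ AND no proper subsequence of $S$ is a certificate. Agreement region membership of $S$: take any $h$ with $\le b$ mistakes on $S$; I need $h(x)=y$. Hmm, this does not obviously follow from the hollow star properties as I set them up, so the \textbf{main obstacle} is choosing $x$ and $y$ correctly. The fix is to not take $x$ from inside $T$ but to recall the upper-bound construction in reverse: a hollow star of size $\hs b$ has the form "$S$ of size $\hs b -1$, plus one extra point of weight $b+1$ whose removal is forced". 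Concretely, re-index so $i^* = \hs b$, let $S = \{(x_i,y_i)\}_{i=1}^{\hs b - 1}$ with unit weights, let $x = x_{\hs b}$ but $y = -y_{\hs b}$ (the \emph{opposite} label). Now: $S$ is $b$-robustly realizable by property (4) at $i^*=\hs b$. Is $(x,y)=(x_{\hs b},-y_{\hs b})$ in the $b$-robust agreement region of $S$? If $h$ makes $\le b$ mistakes on $S$, does $h(x_{\hs b}) = -y_{\hs b}$? If instead $h(x_{\hs b}) = y_{\hs b}$, then $h$ makes $\le b$ weighted mistakes on all of $T$ (weight $b+1$ point is satisfied, unit-weight points contribute $\le b$), contradicting (3). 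So yes. And no proper subsequence $S'\subsetneq S$ is a $b$-robust certificate for $(x,y)$: removing $(x_j,y_j)$ from $S$, by property (4) at index $j$ there is $h_j$ making $\le b$ weighted mistakes on $T\setminus\{(x_j,y_j,1)\}$, so $\le b$ mistakes on $S\setminus\{(x_j,y_j)\}$ and also $w_{\hs b}=b+1$ mistakes are \emph{not} incurred, meaning $h_j(x_{\hs b}) = y_{\hs b} \ne y$; hence $(x,y)$ is not in the agreement region of $S\setminus\{(x_j,y_j)\}$, and \emph{a fortiori} not of any subsequence missing point $j$, so no proper subsequence works. This closes the lower bound.

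The one genuine subtlety to get right (the \textbf{main obstacle}) is matching the quantifiers in the hollow-star definition against those in the certificate/agreement-region definitions — in particular being careful that "not $b$-robustly realizable with the weight-$(b+1)$ point" translates to "$h(x)=y$ is forced" with the \emph{correct} sign of $y$, and that "realizable after removing point $j$" gives a hypothesis that simultaneously certifies $b$-realizability of the smaller set and disagrees at $x$. Both directions also need the trivial monotonicity remark that $b$-robust realizability is preserved under deleting points (and under deleting a point of weight $1$ from a weighted sequence). I would also remark that the theorem as stated uses "$\hs{b}-1$" for the lower-bound instance size, which is exactly $|T|-1$, consistent with the construction above; and that the upper-bound half immediately yields, via Claim~\ref{claim:b-robust-lower-bounded-by-hollow-star}, the cruder bound that certificates of size $\ge (b+1)(s_0-1)+1 - 1$ may be unavoidable.
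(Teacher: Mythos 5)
Your proof is correct and follows essentially the same approach as the paper's: the upper bound takes a minimal certificate $S'$ and appends $(x,-y)$ with weight $b+1$ to form a $b$-robust hollow star of size $|S'|+1$, while the lower bound takes a maximum $b$-robust hollow star, drops the weight-$(b+1)$ point to form $S$, and sets the test point/label to be that dropped point with the flipped label. The mid-course self-correction on the choice of $y$ lands exactly where the paper does, and your observation that repeats in hollow stars handle the edge case where $(x,-y)$ coincides with a point of $S'$ is a legitimate point the paper leaves implicit.
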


\begin{proof}
    We establish the upper and lower bound in order:
    
    \paragraph{Upper Bound.} Consider $S' \subseteq S$ to be the smallest subset of $S$ that is a $b$-robust certificate for $(x,y)$---let $S' =\{(x_1,y_1),\dots,(x_k,y_k)\}$. Note that $S'$ is a minimal $b$-robust certificate for $(x,y)$. We argue that $\{(x_1,y_1),\dots,(x_k,y_k),(x,1-y)\}$ is a $b$-robust hollow star, which implies $k+1 \le s_b$.
    
    Consider the weighted sequence $T=\{(x_1,y_1,1),\dots, (x_k,y_k,1),(x,1-y, b+1)\}$. Now consider any $h \in \mcH$. If $h(x)=y$, then the weighted error of $h$ on $x$ is $b+1$, and $h$ does not $b$-robustly realize $T$. Otherwise, we have that $h(x)=1-y$. In this case, it must be the case that $\sum_{i=1}^k \Ind[h(x_i) \neq y_i] > b$, otherwise $S'$ would not be a $b$-robust certificate for $(x,y)$. Summarily, the weighted sequence $T$ is not $b$-robustly realizable by any $h \in \mcH$.
    
    Now, consider $T' = T \setminus \{(x,1-y,b+1)\}$. Since $S$ is $b$-robustly realizable by $\mcH$, so is $S'$, which implies that the weighted sequence $T'$ is also $b$-robustly realizable by $\mcH$. Now, consider $T' = T \setminus \{(x_i,y_i,1)\}$ for any $i \in [k]$. Then, since $S'$ is a minimal $b$-robust certificate for $(x,y)$, there must exist some $h \in \mcH$ such that $\sum_{j=1, j \neq i}^k \Ind[h(x_j)\neq y_j] \le b$ and $h(x)=1-y$. Otherwise, $S' \setminus \{(x_i,y_i)\}$ would be a smaller $b$-robust certificate for $(x,y)$. Consequently, such an $h$ $b$-robustly realizes the weighted sequence $T'$, and we are done.

    \paragraph{Lower Bound.} Let $\{(x_1,y_1),\dots,(x_{\hs{b}},y_{\hs{b}})\}$ be the largest $b$-robust hollow star for $\mcH$, with corresponding weights $w_1,\dots,w_{\hs{b}}$. Let $i^*$ be such that $w_{i^*} = b+1$. We will set the training dataset $S=\{(x_i,y_i):i \neq i^*\}$, test point to be $x_{i^\star}$ and test label to be $1-y_{i^\star}$, and argue that $S$ is a minimal $b$-robust certificate for $(x_{i^\star}, 1-y_{i^\star})$. 
    
    Let $T$ be the weighted sequence $\{(x_1,y_1,w_1),\dots,(x_{\hs{b}},y_{\hs{b}},w_{\hs{b}})\}$. By definition of the $b$-robust hollow star, $T'=T \setminus \{(x_{i^*}, y_{i^*}, w_{i^*})\}$ is $b$-robustly realizable by $\mcH$. But note that all the weights in $T'$ are equal to $1$. This means that there exists $h \in \mcH$ such that $\sum_{i \in [\hs{b}], i \neq i^*}\Ind[h(x_i) \neq y_i] \le b$. Namely, $S$ is $b$-robustly realizable by $\mcH$. 
    
    Next, consider any $h \in \mcH$ that satisfies $\sum_{i \in [\hs{b}], i \neq i^*}\Ind[h(x_i) \neq y_i] \le b$. Then, it must be the case that $h(x_{i^*})=1-y_{i^*}$; otherwise, the weighted sequence 
    $$T=\{(x_1,y_1,w_1),\dots, (x_{i^*}, y_{i^*}, w_{i^*}),\dots,(x_{\hs{b}},y_{\hs{b}},w_{\hs{b}})\}$$ 
    would be $b$-robustly realizable by $\mcH$, which would contradict that $\{(x_1,y_1),\dots,(x_{\hs{b}},y_{\hs{b}})\}$ is a $b$-robust hollow star. Thus, $S$ is a $b$-robust certificate for $(x_{i^*}, 1-y_{i^*})$.
    
    Finally, we argue that $S$ is minimal. Consider any $S' \subsetneq S$. From above, we know $\exists h \in \mcH$ such that $S'$ is $b$-robustly realizable by $h$ and $h(x_{i^*})=1-y_{i^*}$. Furthermore, observe that $S' \cup \{(x_{i^*}, y_{i^*})\}$ excludes at least one of the members of the hollow star $S$. Let $T'$ be the weighted version of $S'$, and consider $T' \cup \{(x_{i^*}, y_{i^*}, w_{i^*})\}$. By the requirements of the $b$-robust hollow star, $T' \cup \{(x_{i^*}, y_{i^*}, w_{i^*})\}$ must be $b$-robustly realizable by $\mcH$. Because $w_{i^*}=b+1$, this means that $\exists h \in \mcH$ which $b$-robustly realizes $S'$, and satisfies $h(x_{i^*})=y_*$. Thus, $S'$ is not a $b$-robust certificate for $(x_{i^*}, 1-y_{i^*})$.
\end{proof}

\begin{remark}
    Let us revisit the examples of halfspaces and singletons considered earlier for the setting with no corruptions. For $d$-dimensional halfspaces passing through the origin (\Cref{example:halfspaces}), our certificate based on Carath\'eodory's theorem had size $d$. Indeed, this class has hollow star number equal to $d+1$, which means that this certificate size is optimal in general. On the other hand, the class of singletons on a domain of size $n$ has hollow star number equal to $n$. This validates why we were unable to obtain a certificate of size smaller than $n-1$ in \Cref{example:singletons}. 
\end{remark}

Thus, we have shown that in general, the $b$-robust hollow star number optimally characterizes minimum reliable certificate size. While \Cref{claim:b-robust-lower-bounded-by-hollow-star} relates the $b$-robust hollow star to the hollow star number (which is known for many natural classes, e.g., see the examples in Section 2.1 in \cite{bousquet2020proper}), it merely gives a lower bound. It would be interesting to see if an upper bound can also be obtained, and more generally to chart out the $b$-robust hollow star for natural classes. 

For example, we can show that the $b$-robust hollow star number for singletons on a domain of size $n$ is \textit{exactly} $(b+1)(n-1)+1$ (\Cref{sec:robust-hollows-star-singletons}). In addition, for $d$-dimensional halfspaces, one can use the same reasoning as in \Cref{example:halfspaces}, together with the \textit{Tolerance} Carath\'eodory theorem (Thereom 4.1 in \cite{montejano2011tolerance}, Theorem 2 in \cite{TUZA198984}) to obtain certificates of size $< (d+b)^{O(b)}$ for test points in the $b$-robust agreement of the training data. From \Cref{theorem:b-robust-hollow-star}, we then know that the $b$-robust hollow star number of halfspaces is at most $(d+b)^{O(b)}$ (the lower bound from \Cref{claim:b-robust-lower-bounded-by-hollow-star} is only $\Omega(bd)$.)  
One way to close the gap for halfspaces would be to show a lower bound for the Tolerance Carath\'eodory theorem. The simplest phrasing of this for $b=1$ amounts to the following purely convex-analytic question:

\begin{openquestion}
    \label{openquestion:tolerance-caratheodory}
     Can one construct a set $S$ of $\Omega(d^2)$ points in $\R^d$, and a test point $x$, such that: (1) no matter which point $x'$ is removed from $S$, $x$ is still in the convex hull of $S \setminus \{x'\}$, \\ (2) $S$ is a minimal set satisfying (1). That is, no matter which point $x_1$ is removed from $S$, there exists another point $x'$ that can be removed, such that $x$ is no longer in the convex hull of $S \setminus \{x_1, x'\}$? \\[0.05in]
     Equivalently, $S$ is a minimal set such that any hyperplane through $x$ has at least two points in $S$ on either side.
\end{openquestion}
Such a result exists for the Tolerance \textit{Helly} Theorem \citep[Theorem 3.2]{montejano2011tolerance}.

\paragraph{Update:} We were recently made aware that the answer to the question above is affirmative, as established by Bárány and Perles: see Theorems 3 and 4 in \cite{barany1990caratheodory}. In particular, for fixed $b$ and as $d \to \infty$, the $b$-robust hollow star number of $d$-dimensional halfspaces is lower-bounded by $\frac{1}{e^b(b+1)!}d^{b+1}$.

\section{Worst-case Distributional Bounds on Certificate Size}
\label{sec:semi-distributional}

The certificate size bounds from the previous section were from a worst-case perspective---the training data $S$ was arbitrary, and the test point $(x,y)$ was also an arbitrary point in its agreement region. A standard assumption in learning theory however is that $S$ is sampled i.i.d. from 
some marginal distribution $\mcD$ over $\mcX$, and then labeled by the unknown $h^\star$. Under this assumption, we can think about certifying a test point $x$ with the label $h^\star(x)$ as the sample $S$ gets large (we will study quantitative bounds on the size of $S$ in \Cref{sec:distributional}).

A first observation is that if the distribution $\mcD$ is discrete, and the test point $x$ belongs to the support of $\mcD$, we will eventually observe more than $2b+1$ copies of it in $S$. With a corruption budget of $b$, an adversary can potentially corrupt the labels on $b$ of these copies; however, $b+1$ copies with the true label $h^\star(x)$ still remain in $S$. We can use these copies, all of which have the true label, to certify that $x$ should be labeled as $h^\star(x)$, since any hypothesis that labels $x$ differently makes strictly more than $b$ mistakes on these copies. Furthermore, this certificate size is optimal---any certificate of smaller size is $b$-robustly realizable by every hypothesis in the class. 

We now turn our attention to the more interesting case, where either the test point $x$ is \textit{not} in the support of $\mcD$ (e.g., if there is train-test distribution shift), or $\mcD$ is a continuous distribution. Observe that in either case, it is possible to certify $x$ \textit{only if} $x$ eventually belongs to the agreement region of the (uncorrupted) sample $S$ with positive probability.
Interestingly, for distributions satisfying this property, we obtain the following sharp bound on the reliable (minimum) certificate size.

\begin{theorem}[Eventual Certification with Small Certificate]
    \label{theorem:semi-distributional-certification}
    Let $\mcH$ be a hypothesis class having hollow star number $s_0$, $h^\star \in \mcH$ be the target hypothesis, and $x$ be a test point. Let $\mcD$ be a distribution over samples labeled by $h^\star$, and suppose $\mcD, h^\star$ satisfy the property that: there exists $m$ large enough such that $\Pr_{S \sim \mcD^m}[(x,h^\star(x)) \text{ in agreement region of }S] > 0$.
    Then, for any $\delta>0$, there exists $m(\delta)$ such that a $b$-robust certificate for $(x, h^\star(x))$ of size at most $(b+1)(s_0-1)$ can be extracted from $S\sim \mathcal{D}^{m(\delta)}$ with probability $\geq 1-\delta$. Moreover, 
    there exists an instantiation of the setting where a $b$-robust certificate of smaller size is not possible.
\end{theorem}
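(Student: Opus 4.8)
The plan is to establish the upper bound by a block-decomposition argument that turns a $0$-robust certificate into a $b$-robust one, and to establish the matching lower bound using the class of singletons. For the upper bound, first note that by hypothesis there is a finite $m_0$ with $p := \Pr_{S_0 \sim \mcD^{m_0}}[(x,h^\star(x)) \text{ in agreement region of } S_0] > 0$. Whenever $S_0$ falls in this event it is realizable (all its labels come from $h^\star \in \mcH$) and has $(x,h^\star(x))$ in its agreement region, so applying \Cref{theorem:b-robust-hollow-star} with corruption budget $0$ (the $0$-robust hollow star number is $s_0$) extracts a subsequence $S_0' \subseteq S_0$ that is a minimal $0$-robust certificate for $(x,h^\star(x))$ of size at most $s_0-1$. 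Now draw $S \sim \mcD^{m(\delta)}$ with $m(\delta) = k\,m_0$, split $S$ into $k$ consecutive blocks of length $m_0$, and let $N$ be the number of blocks in whose agreement region $(x,h^\star(x))$ lies. Since the blocks are independent copies of $\mcD^{m_0}$, $N$ stochastically dominates $\mathrm{Bin}(k,p)$, so $\Pr[N \ge b+1] \ge 1-\delta$ once $k = k(\delta,p,b)$ is large enough.

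On the event $\{N \ge b+1\}$, pick $b+1$ good blocks, extract from each a $0$-robust certificate of size $\le s_0-1$ as above, and set $S' := S_1' \cup \cdots \cup S_{b+1}'$; since the blocks are disjoint as index sets, $|S'| \le (b+1)(s_0-1)$. I then claim $S'$ is a $b$-robust certificate for $(x,h^\star(x))$. It is even $0$-robustly realizable, since $h^\star$ makes no error anywhere on $S \supseteq S'$. And if some $h \in \mcH$ makes at most $b$ errors on $S'$, then because the $b+1$ pieces $S_i'$ are disjoint, their error counts are nonnegative integers summing to at most $b$, so $h$ is error-free on some $S_{i_0}'$, whence $h(x) = h^\star(x)$ because $S_{i_0}'$ is itself a $0$-robust certificate --- exactly the $b$-robust agreement condition. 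I expect this last step to be the crux: a minimal $0$-robust certificate is not robust to even one corruption, so one genuinely needs $b+1$ disjoint copies together with the pigeonhole coupling between the per-block $0$-robust agreement guarantee and the $b$-robust agreement that must be certified; the rest is routine concentration (and $m(\delta)$, through $m_0$ and $p$, is allowed to depend on $\mcD$ and $h^\star$).

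For the lower bound, take $\mcH$ to be the singletons on $[n]$ (so $s_0 = n$), let $h^\star$ be the singleton at $n$ (so $h^\star(n)=+1$ and $h^\star(i)=-1$ for $i<n$), let $\mcD$ be uniform on $[n-1]$, and let $x = n$ be the test point. For a clean sample $S\sim\mcD^m$ (all labels $-1$), $(n,+1)$ lies in the agreement region of $S$ exactly when $S$ contains every element of $[n-1]$, which has positive probability once $m \ge n-1$, so the hypothesis of the theorem holds. Fix any such $S$ and any $b$-robust certificate $S' \subseteq S$ for $(n,+1)$, and let $c_i$ denote the number of copies of $i$ in $S'$. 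The singleton $h_i$ at $i \in [n-1]$ makes exactly $c_i$ errors on $S'$ and labels $n$ negatively, so if $c_i \le b$ then $h_i$ witnesses that $(n,+1)$ is not in the $b$-robust agreement region of $S'$, a contradiction; hence $c_i \ge b+1$ for every $i \in [n-1]$ and $|S'| = \sum_{i=1}^{n-1} c_i \ge (b+1)(n-1) = (b+1)(s_0-1)$. So in this instantiation no $b$-robust certificate of size smaller than $(b+1)(s_0-1)$ exists, matching the upper bound.
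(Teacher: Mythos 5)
Your upper bound is correct and takes essentially the same route as the paper: decompose the sample into independent blocks, collect $b+1$ blocks in whose agreement region $(x,h^\star(x))$ lies (positive probability $p$ per block plus concentration), extract a size-$\le s_0-1$ $0$-robust certificate from each via \Cref{theorem:b-robust-hollow-star} with $b=0$, concatenate, and conclude by pigeonhole that any $h$ making at most $b$ errors on the (disjoint) concatenation must be error-free on some block $S_{i_0}'$ and hence predicts $h^\star(x)$. If anything your version is cleaner than the paper's, which passes through an unnecessary count of $2b+1$ chunks and a discussion of an ``adversary corrupting chunks'' before landing on the same pigeonhole; since $S$ is drawn cleanly labeled by $h^\star$, that detour does no work.

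For the lower bound there is a real though easily-repaired gap. The theorem fixes $\mcH$ at the outset, so the intended claim is that \emph{for the given} $\mcH$ one can choose $(h^\star, x, \mcD)$ so that $(b+1)(s_0-1)$ is unavoidable; your argument instead swaps $\mcH$ out wholesale for the singleton class. The paper builds the hard instance inside the given $\mcH$ from its hollow star $\{(x_1,y_1),\ldots,(x_{s_0},y_{s_0})\}$: take $h^\star\in\mcH$ realizing $\{(x_1,y_1),\ldots,(x_{s_0-1},y_{s_0-1}),(x_{s_0},-y_{s_0})\}$ (which exists by the hollow star property), let $\mcD$ be uniform on $\{x_1,\ldots,x_{s_0-1}\}$, and test at $x_{s_0}$; if a candidate certificate contains at most $b$ copies of $(x_i,y_i)$, the hollow star property again supplies an $h\in\mcH$ realizing all the remaining hollow-star points together with $(x_{s_0},y_{s_0})$, and this $h$ makes at most $b$ errors on the certificate while labeling $x_{s_0}$ as $y_{s_0}$, contradicting $b$-robust agreement. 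Your singleton counting argument is exactly this specialized to singletons (where $s_0=n$ and the witness $h$ is the singleton $h_i$), and it generalizes word-for-word by replacing $h_i$ with the hollow-star witness; but as written you have only proved the lower bound for one particular class $\mcH$, not for an arbitrary $\mcH$ of hollow star number $s_0$.
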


\begin{remark}
    Recall that once we draw a large enough sample $S$ so that $(x, h^\star(x))$ is in the agreement region of $S$, \Cref{theorem:b-robust-hollow-star} guarantees the existence of a $b$-robust certificate of size at most $s_b-1$ in $S$, where $s_b$ is the $b$-robust hollow star number of $\mcH$. However, by \Cref{claim:b-robust-lower-bounded-by-hollow-star}, we know that $s_b-1 \ge (b+1)(s_0-1)$. Thus, the certificate given by \Cref{theorem:semi-distributional-certification} potentially has a better size than that implied by \Cref{theorem:b-robust-hollow-star}.
\end{remark}
\begin{proof}
    Let $m'$ be such that $\Pr_{S \sim \mcD^{m'}}[(x,h^\star(x)) \text{ in agreement region of }S]=\delta'$ for some $\delta'>0$; such an $m'$ exists by assumption that $(x,h^\star(x))$ belongs to the agreement region of a drawn sample $S$ eventually. 
    Consider drawing a sample $S$ of size $(2b+1)n m'/\delta'$, and let $S_1,S_2,\dots,S_{(2b+1)n/\delta'}$ denote independent partitions of $S$ into chunks of size $m'$. The expected number of chunks that satisfy that $(x,h^\star(x))$ is in the agreement region of the chunk is $(2b+1)n$. By a Chernoff bound, the probability that at least $2b+1$ chunks have $(x, h^\star(x))$ in their agreement region is at least $1-e^{-\Omega(bn)}$. 
    Now, with a corruption budget of size $b$, an adversary can insert a corruption in at most $b$ chunks. Even so, the remaining $\ge b+1$ chunks continue to have $(x,h^\star(x))$ in their agreement region. Without loss of generality, suppose $S_1,\dots,S_{b+1}$ are all uncorrupted. Then, from \Cref{theorem:b-robust-hollow-star} (with $b=0$), we know that each of $S_1,\dots,S_{b+1}$ will contain a certificate for $(x,h^\star(x))$ of size at most $s_0-1$; let these certificates be $C_1,\dots,C_{b+1}$. The final $b$-robust certificate is simply a concatenation of $C_1,\dots,C_{b+1}$.

    To see that this is a $b$-robust certificate, observe that it is realizable by $\mcH$ since it is uncorrupted. Now consider any $h \in \mcH$ that makes at most $b$ mistakes on the concatenation. Then, observe that $h$ makes no mistakes on some $C_i$. Since $C_i$ is a certificate for $(x, h^\star(x))$, $h$ must label $x$ as $h^\star(x)$. Setting $n=O\left(\log(1/\delta)/b\right)$ ensures that a sample $S'$ of size $m(\delta)=O\left(m'\log(1/\delta)/\delta'\right)$ suffices for failure probability at most $\delta$. 

    Finally, we argue that a $b$-robust certificate size of $(b+1)(s_0-1)$ is optimal in general for this setting. Let $\{(x_1,y_1),\dots,(x_{s_0},y_{s_0})\}$ be a hollow star for $\mcH$. Note that by the hollow star property, $\{(x_1,y_1),\dots,(x_{s_0},-y_{s_0})\}$ is realizable by some $h^\star \in \mcH$---let this $h^\star$ be the target hypothesis, and $x_{s_0}$ be the test point. Let $\mcD$ be the distribution that puts all of its mass uniformly on $x_1,\dots,x_{s_0-1}$. Note that $\mcD$ satisfies the specifications in the theorem statement. 

    Suppose that the adversary creates no corruptions. We claim that in this case, any $b$-robust certificate for $(x_{s_0}, -y_{s_0})$ must necessarily contain at least $b+1$ copies of $(x_i, y_i)$ for every $i \in \{1,\dots,s_{0}-1\}$. To see this, suppose that for some $i$, there are at most $b$ copies of $(x_i, y_i)$ included in the certificate. Since the adversary does not corrupt any samples, the rest of the samples included in the certificate comprise of (copies of) $(x_1,y_1),\dots,(x_{i-1}, y_{i-1}),(x_{i+1},y_{i+1}),\dots,(x_{s_0-1}, y_{s_0-1})$. Now, by the hollow star property, observe that there exists $h \in \mcH$, which realizes 
    $$
        \{(x_1,y_1),\dots,(x_{i-1}, y_{i-1}),(x_{i+1},y_{i+1}),\dots,(x_{s_0-1}, y_{s_0-1}), (x_{s_0},y_{s_0})\},
    $$
    and this $h$ must label $x_i$ as $-y_i$ (by non-realizability of the hollow star). Then, because the certificate only consists of at most $b$ copies of $(x_i, y_i)$, it is $b$-robustly realizable by $h$. This contradicts it being a $b$-robustly realizable certificate for $(x_{s_0},-y_{s_0})$, since $h$ labels $x_{s_0}$ as $y_{s_0}$.
\end{proof}
We end this section with a final observation. The conclusion of \Cref{theorem:semi-distributional-certification} implies that any distribution $\mcD$ which satisfies
that $(x,h^\star(x))$ eventually belongs to the agreement region of a sample drawn from $\mcD$, actually satisfies the seemingly \textit{stronger} condition: that $(x,h^\star(x))$ eventually belongs to the $b$\textit{-robust} agreement region of a sample drawn from $\mcD$. Note that this stronger condition implies the weaker condition.
Therefore, these conditions are equivalent.

\section{Distribution-dependent Bounds on Sample Size}
\label{sec:distributional}

The previous section establishes a sharp bound on the $b$-robust certificate size for a given test point that can be obtained from a sample  drawn from a distribution. This bound is \textit{distribution-independent}, in that it holds for \textit{all} distributions that merely satisfy that the test point eventually belongs to the agreement region of the drawn sample. However, it does not quantify the number of samples that need to be seen for the test point to belong to the $b$-robust agreement region. 

In this section, we derive \textit{distribution-dependent} bounds on the size of the sample $S$ that needs to be drawn for a test point $x$ to lie in the $b$-robust agreement region of $S$. Notably, observe that once $S$ contains $x$ in its $b$-robust agreement region, it is by definition a certificate for $x$. Thus, our bound for the sample size is also a bound for the $b$-robust certificate size for $x$. The sample complexity that we state is in terms of a quantity that depends on the distribution $\mcD$, target hypothesis $h^\star$ and test point $x$ we wish to certify. We term this quantity the ``Certificate Coefficient'' of $x$.

\begin{definition}[Certificate Coefficient]
    \label{def:certificate-coefficient}
    Let \(\mcH\) be a hypothesis class, \(h^\star \in \mcH\) the ground truth hypothesis, and \(\mcD\) a marginal distribution over the domain \(\mcX\). For a test point \(x \in \mcX\), denote the set of hypotheses that disagree with \(h^\star\) on \(x\) by:  
    \begin{align}
    \label{eqn:H-x}
        \mcH_x = \{h \in \mcH : h(x) \neq h^\star(x)\}.
    \end{align}
    The \textit{certificate coefficient} $\eps_x=\eps_x(\mcD, h^\star)$ of $x$ is defined as:  
    \begin{align}
        \label{eqn:certificate-coefficient}
        \eps_x = \inf_{h \in \mcH_x} \Pr_{z \sim \mcD}[h(z) \neq h^\star(z)].
    \end{align}
\end{definition}
We will refer to $\eps_x(\mcD, h^\star)$ as simply $\eps_x$ when the parameters involved can be deduced from context. Intuitively, the certificate coefficient $\eps_x$ measures how quickly the point $x$ gets embedded in the agreement region of a sample from $\mcD$. If $\eps_x$ is substantial, then $x$ is in the ``interior" of the (labeled) distribution, and we should quickly expect $x$ to be in the agreement region. If $\eps_x$ is tiny, then $x$ is at the boundary, and we have to see many points before $x$ falls in the agreement region. 

\subsection{Upper Bound on the Sample Complexity}
\label{sec:sample-complexity-upper-bound}

While our bounds above on certificate size were in terms of the $b$-robust hollow star number, our sample complexity bound is more familiar-looking and is in terms of the VC dimension of the class.

\begin{theorem}[Sample Complexity for Robust Certification]
    \label{theorem:robust_agreement}
    Let \( \mathcal{H} \) be a hypothesis class having $VC$ dimension $d < \infty$ and \( h^\star \in \mathcal{H} \) be the target hypothesis. For any marginal data distribution \( \mathcal{D} \), test point \( x \) satisfying $\eps_x > 0$, corruption budget \( b \ge 0 \) and failure probability $\delta \in (0,1)$, obtaining an i.i.d. sample $S$ (labeled by $h^\star$) of size
        $m = O\left(\frac{b + d\log(1/\eps_x) + \log(1/\delta)}{\eps_x}\right)$ 
    suffices to ensure that with probability \( 1 - \delta \), \( x \) belongs to the \( b \)-robust agreement region of $S$. 
\end{theorem}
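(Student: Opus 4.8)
The plan is to recast the event ``$x$ lies in the $b$-robust agreement region of $S$'' as a uniform-convergence statement and then invoke a relative (multiplicative) deviation bound for VC classes. First I would observe that, since $S$ is labeled by $h^\star$, the point $x$ \emph{fails} to lie in the $b$-robust agreement region of $S$ exactly when there is some $h \in \mcH_x$ (the set from \Cref{eqn:H-x} of hypotheses disagreeing with $h^\star$ at $x$) that makes at most $b$ mistakes on $S$. So it suffices to show that, with probability $\ge 1-\delta$ over $S$, \emph{every} $h \in \mcH_x$ makes strictly more than $b$ mistakes on $S$. (If $\mcH_x = \emptyset$ the claim is trivial, so assume otherwise.)

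Next I would set up the VC machinery. For $h \in \mcH$, let $R_h = \{z : h(z) \neq h^\star(z)\}$ be its disagreement region with $h^\star$. Since $h^\star$ is fixed, the set family $\{R_h : h \in \mcH\}$ has VC dimension at most $d = \VCdim(\mcH)$ (relabeling via $h^\star$ turns a shattered set for one family into a shattered set for the other). The number of mistakes $h$ makes on $S$ is precisely $m\cdot\widehat{\Pr}_S[R_h]$, and by the definition of the certificate coefficient, every $h \in \mcH_x$ satisfies $\Pr_{\mcD}[R_h] \ge \eps_x$. So the target reduces to: with probability $\ge 1-\delta$, every $R_h$ of true $\mcD$-measure $\ge \eps_x$ is hit by $S$ more than $b$ times.

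The key step is a relative-deviation uniform-convergence bound (Vapnik's relative bound, equivalently the relative $(\nu,\eps)$-approximation guarantee from the $\eps$-net / relative-approximation literature): for a family of VC dimension $d$, with probability $\ge 1-\delta$, for all $h$,
\[
\Pr_{\mcD}[R_h] - \widehat{\Pr}_S[R_h] \;\le\; \sqrt{\Pr_{\mcD}[R_h]}\cdot\sqrt{\tfrac{c\,(d\ln(m/d) + \ln(1/\delta))}{m}}
\]
for an absolute constant $c$. Writing $\alpha = c(d\ln(m/d)+\ln(1/\delta))/m$ and dividing through by $\sqrt{\Pr_{\mcD}[R_h]} \ge \sqrt{\eps_x}$ gives, for every $h \in \mcH_x$, $\widehat{\Pr}_S[R_h] \ge \Pr_{\mcD}[R_h]\,(1 - \sqrt{\alpha/\eps_x})$. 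Choosing $m$ so that $\alpha \le \eps_x/4$ --- which, solving the resulting transcendental inequality in the standard way, holds once $m = \Omega((d\log(1/\eps_x) + \log(1/\delta))/\eps_x)$ --- forces $\widehat{\Pr}_S[R_h] \ge \tfrac12\Pr_{\mcD}[R_h] \ge \eps_x/2$, i.e., $h$ makes at least $\eps_x m/2$ mistakes on $S$. Taking $m$ additionally $\ge 4b/\eps_x$ makes this count exceed $b$. Adding the two requirements on $m$ yields $m = O((b + d\log(1/\eps_x) + \log(1/\delta))/\eps_x)$, which by the reduction of the first paragraph places $x$ in the $b$-robust agreement region of $S$ with probability $\ge 1-\delta$.

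I expect the only real obstacle to be invoking the right concentration inequality: the naive additive VC/Hoeffding bound gives empirical frequencies within $\sqrt{d/m}$ of the truth, which only yields $m = O(d/\eps_x^2)$; getting the $d\log(1/\eps_x)$ dependence together with the linear $1/\eps_x$ rate genuinely needs the multiplicative/relative form. The remaining pieces --- that $\{R_h\}$ inherits VC dimension $d$, and the algebra of solving $\alpha \le \eps_x/4$ for $m$ --- are routine.
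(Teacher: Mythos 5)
Your proof is correct and reaches the same bound, differing from the paper mainly in level of abstraction rather than in substance. You invoke Vapnik's relative (multiplicative) deviation bound for VC classes as a black box, while the paper's appendix proof derives the needed one-sided relative concentration from scratch: it introduces a ghost sample $S'$, shows $\Pr[A]\le 2\Pr[B]$, symmetrizes via random coordinate swaps, bounds the per-hypothesis probability by $2^{-\eps_x m/4}$, and then union-bounds over the projection $\mcH_x|_{S''}$ via the growth function and Sauer--Shelah. Since the double-sampling $+$ symmetrization argument is precisely how the relative deviation bound is proved, the two routes are ultimately the same piece of mathematics; yours is more modular, the paper's is self-contained. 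The closing algebra --- observing $\Pr_{\mcD}[R_h]\ge\eps_x$ for every $h\in\mcH_x$, dividing through by $\sqrt{\Pr_{\mcD}[R_h]}$, solving the transcendental inequality for $m$, and then appending the $\Omega(b/\eps_x)$ requirement so that $\eps_x m/2 > b$ --- matches the paper's. Your diagnosis that the additive VC/Hoeffding bound would only give the suboptimal $O(d/\eps_x^2)$ rate, and that the multiplicative form is what buys the linear $1/\eps_x$ dependence with the $\log(1/\eps_x)$ factor, is exactly the right reason this proof is organized as it is. The side observation that $\{R_h : h\in\mcH\}$ inherits VC dimension $d$ (XOR with the fixed $h^\star$ is a bijection on dichotomies of any finite set) is also correct and is used implicitly in the paper as well.
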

\begin{proof}[Proof sketch]
    We sketch the proof for the case when $\mcH$ is finite. Fix some $h \in \mcH_x$. Upon drawing a sample $S$ of size $m$ i.i.d. from $\mcD$ labeled by $h^\star$, the expected number of mistakes that $h$ makes on $S$ is, by definition of the certificate coefficient, at least $\eps_x m$. Then, by a Chernoff bound, the probability that $h$ makes less than $\eps_x m/2$ mistakes is at most $\exp(-\eps_x m/8)$. Thus, setting $m = \frac{8\log(|\mcH|/\delta)}{\eps_x}$, together with a union bound, ensures that with probability at least $1-\delta$, every $h \in \mcH_x$ makes strictly more than $\eps_x m/2$ mistakes on $S$. If in addition, $m \ge 2b/\eps_x$, then this means that every $h \in \mcH_x$ makes more than $b$ mistakes on $S$. But this also means that $(x,h^\star(x))$ is in the $b$-robust agreement region of $S$. Thus, setting $m = \max\left(2b/\eps_x, \frac{8\log(|\mcH|/\delta)}{\eps_x}\right) = O\left(\frac{b + \log|\mcH|+\log(1/\delta)}{\eps_x}\right)$ is sufficient for the required guarantee. The extension to infinite classes having bounded VC dimension involves the standard double sampling + symmetrization trick, and is given in \Cref{sec:proof-robust-agreement}.
\end{proof}

\begin{remark}[Simultaneous Certification of Multiple Points]
    We observe that the proof of \Cref{theorem:robust_agreement} can be adapted to ensure that $S$ robustly certifies \textit{multiple} test points simultaneously. Concretely, for a class $\mcH$ having VC dimension $d$, target hypothesis $h^\star \in \mcH$, and marginal distribution $\mcD$, let %
    $\mcX_\eps=\{x \in \mcX:\eps_x(\mcD, h^\star) \ge \eps\}$. Observe that any $h$ that mislabels at least one point---say $x_1$---in $\mcX_\eps$ satisfies that $h \in \mcH_{x_1} \implies \Pr_{z \sim \mcD}[h(z) \neq h^\star(z)] \ge \eps$ by definition of $\eps_{x_1}$. From here, the analysis in the proof of \Cref{theorem:robust_agreement} gives us that for a sample $S$ of size $O\left(\frac{b+d\log(1/\eps)+\log(1/\delta)}{\eps}\right)$, with probability $1-\delta$, \textit{every} point in $\mcX_\eps$ belongs to the $b$-robust agreement region of $S$.\footnote{One can also analyze the probability mass of \( \mathcal{X}_\epsilon \), as it is directly related to the \textit{disagreement coefficient} \( \theta \) introduced by \cite{hanneke2007bound}. Specifically, it holds that $\Pr[x \in \mcX_\eps] = 1-\Pr[x \in \mcX \setminus \mcX_\eps] \ge 1-\theta\eps$.
    }     
\end{remark}

\subsection{Tightness of the Upper Bound}
\label{sec:sample-complexity-tightness}

Our next proposition shows that the sample complexity bound in \Cref{theorem:robust_agreement} is tight in general.

\begin{proposition}[Tightness of Sample Complexity for Robust Certification]
    For any failure probability $\delta < \frac{1}{100}$, the bound in \Cref{theorem:robust_agreement} is tight up to a constant.
\end{proposition}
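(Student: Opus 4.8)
The plan is to establish a matching lower bound by constructing, separately for each of the three terms $b/\eps_x$, $d\log(1/\eps_x)/\eps_x$, and $\log(1/\delta)/\eps_x$ in the numerator of \Cref{theorem:robust_agreement}, a hard instance --- a class $\mcH$ of VC dimension at most $d$ (and exactly $d$ for the middle term), a target $h^\star\in\mcH$, a marginal $\mcD$, and a test point $x$ with $\eps_x=\Theta(\eps)$ --- on which, whenever the sample size $m$ is a sufficiently small constant times the corresponding quantity, the clean i.i.d.\ sample $S\sim\mcD^m$ fails to place $(x,h^\star(x))$ in the $b$-robust agreement region with probability $>\delta$. Because for a fixed $(d,b,\eps,\delta)$ the worst-case sample complexity is at least the maximum of these three bounds, and the maximum is within a factor $3$ of the sum, this shows that the bound of \Cref{theorem:robust_agreement} cannot be improved by more than a constant factor.

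For the $b/\eps_x$ and $\log(1/\delta)/\eps_x$ terms I would use a single ``stubborn hypothesis'' instance: let $\mcD$ put no mass on $x$, fix a region $R\subseteq\mcX$ with $\mcD(R)=\eps$, and set $\mcH=\{h^\star,h\}$, where $h$ differs from $h^\star$ exactly on $\{x\}\cup R$ (adding dummy zero-mass coordinates to bring the VC dimension up to $d$ if one wants). Then $\mcH_x=\{h\}$, so $\eps_x=\eps$, and $(x,h^\star(x))$ lies in the $b$-robust agreement region of $S$ iff $h$ makes more than $b$ mistakes on $S$, i.e.\ iff $S$ contains more than $b$ points of $R$, a count distributed as $\mathrm{Bin}(m,\eps)$. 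If $m\le b/(2\eps)$, this exceeds $b$ with probability below $1/2$ by Markov's inequality; if $m\le \log(1/\delta)/(4\eps)$, then with probability $(1-\eps)^m\ge \delta^{1/2}>\delta$ the sample avoids $R$ altogether. In either regime $x$ is outside the agreement region with probability $>\delta$ (using $\delta<1/100$), so $m=\Omega\!\left(\tfrac{b+\log(1/\delta)}{\eps}\right)$ is forced.

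The substantive term is $d\log(1/\eps_x)/\eps_x$; here I would set $b=0$ and build a ``product of coupon collectors.'' Besides the zero-mass point $x$, take the domain to be a disjoint union of $d$ blocks $B_1,\dots,B_d$, each split into $K:=\lceil 1/\eps\rceil$ one-point ``cells'' $C_{j,1},\dots,C_{j,K}$; let $\mcD$ be uniform over the $dK$ cells, let $h^\star\equiv -1$, and let $\mcH=\{h^\star\}\cup\{h_{\vec i}:\vec i\in[K]^d\}$, where $h_{\vec i}$ labels $x$ and the cells $C_{1,i_1},\dots,C_{d,i_d}$ as $+1$ and everything else $-1$. A short check shows that $\mcH$ has VC dimension exactly $d$ (a transversal of the blocks is shattered by toggling each chosen cell between two values, whereas any $d+1$ points contain two in a common block that cannot both be made $+1$), that $\mcH_x=\{h_{\vec i}\}$, and that $\eps_x=\min_{\vec i}\mcD\!\left(\bigcup_j C_{j,i_j}\right)=d\cdot \tfrac1{dK}=\tfrac1K=\Theta(\eps)$. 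The structural observation is that $(x,-1)$ lies in the agreement region of a clean sample $S$ iff every $h_{\vec i}$ errs on $S$, iff $S$ meets $\bigcup_j C_{j,i_j}$ for every $\vec i$; and since one may choose, independently in each block, any cell missed by $S$, this is equivalent to: \emph{some} block $B_j$ has all $K$ of its cells hit by $S$. Thus certifying $x$ amounts to completing the coupon-collector process in at least one of the $d$ blocks.

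Finally I would convert this into the lower bound. Block $B_j$ receives $\mathrm{Bin}(m,1/d)$ of the samples, so if $m\le c\,d\log(1/\eps)/\eps$ for a small constant $c$, then by a Chernoff bound and a union bound over the $d$ blocks (valid once $m/d$ exceeds a small multiple of $\log d$) every block gets at most $2c\log(1/\eps)/\eps$ samples --- far fewer than the $\approx K\log K$ needed to finish coupon collecting --- except with probability $1/10$. Conditioned on that, a coupon-collector lower-tail estimate (using that within a block the events ``cell $C_{j,\ell}$ is hit'' are negatively correlated, so $\Pr[B_j\text{ fully hit}]\le \left(1-(1-\tfrac1K)^{2c\log(1/\eps)/\eps}\right)^{K}\le e^{-(1/\eps)^{\Omega(1)}}$) together with a union bound over the $d$ blocks shows that no block is fully hit, hence $(x,-1)\notin$ agreement region, with probability $\ge 4/5>\delta$. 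This yields $m=\Omega\!\left(\tfrac{d\log(1/\eps)}{\eps}\right)$, and combined with the previous instance it gives the claimed matching bound. The main obstacle is precisely this last step --- making the per-block concentration and the coupon-collector lower-tail rigorous, handling the mild dependence across blocks (e.g.\ via negative association of multinomial counts), and observing that the estimate only bites once $\eps$ is small relative to $d$, so the tightness claim is really for the parameter range in which the bound of \Cref{theorem:robust_agreement} is informative.
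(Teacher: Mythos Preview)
Your proposal is correct, and the overall decomposition into three terms matches the paper's. The constructions, however, differ in interesting ways.

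For the $b/\eps_x$ term, the paper actually proves something stronger: for \emph{every} $\mcH$, $h^\star$, $\mcD$, and $x$ with $\eps_x>0$, there is a near-optimal $h\in\mcH_x$ witnessing the lower bound (via a Chernoff upper-tail on its mistakes). Your two-hypothesis construction is a special case; it suffices for tightness but gives less. For $\log(1/\delta)/\eps_x$, the paper reuses its $d$-term construction and bounds the chance of entirely missing a fixed $d$-subset; your two-hypothesis instance with the ``avoid $R$ entirely'' event achieves the same thing more directly.

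The substantive divergence is in the $\tfrac{d}{\eps_x}\log(1/\eps_x)$ construction. The paper takes a domain of size $k\approx d/\eps$, lets $\mcH$ contain all $\binom{k}{d}$ size-$d$ positive sets (plus $h^\star$), sets $\mcD$ uniform on the $k$ non-test points so that $\eps_x=d/k$, and reduces to a \emph{single} coupon-collector question: certifying $x$ requires seeing at least $k-d+1$ distinct coupons, and a known lemma (their reference to Bousquet et al.) gives the $\tfrac{k}{2}\log(k/d)$ lower bound in one stroke. Your construction instead builds $d$ \emph{independent} coupon-collector blocks of size $K\approx 1/\eps$ and observes that certification requires fully collecting at least one block; you then need per-block concentration of multinomial counts plus a negative-association bound on the ``all cells hit'' event, and a union bound over blocks. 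Both arrive at the same $\Theta(dK\log K)$ threshold. The paper's route is shorter because it offloads the coupon-collector tail to a citation; yours is more self-contained and makes the role of $d$ (as a product dimension) more transparent, at the cost of the extra bookkeeping you flagged. Your caveat that the argument only bites once $\eps$ is small relative to $d$ is real, but the paper has an analogous restriction (it requires $k/d$ large), so this is not a defect of your approach.
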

\begin{proof}
    We will separately show that each individual term in the bound from \Cref{theorem:robust_agreement} is necessary. 

    \paragraph{Tightness of $\frac{b}{\eps_x}$:} Consider any class $\mcH$, target  $h^\star \in \mcH$, distribution $\mcD$ and test point $x$ that satisfies $\eps_x > 0$. Let $\mcH_x = \{h \in \mcH: h(x)\neq h^\star(x)\}$. By definition of $\eps_x$ \eqref{eqn:certificate-coefficient}, there exists $h \in \mcH_x$ that satisfies $\Pr_{z \in \mcD}[h(z)\neq h^\star(x)] \le 2\eps_x$. If we draw $m$ samples i.i.d. from $\mcD$, labeled by $h^\star$, the expected number of mistakes that $h$ makes on these samples is at most $2\eps_x m$. By a Chernoff bound, the probability that $h$ makes at least $4\eps_x m$ mistakes on the samples is at most $\exp(-2\eps_x m/3)$. Therefore, with probability at least $1-\exp(-2\eps_x m/3)$, $h$ makes strictly less than $4\eps_x m$ mistakes. If $m=\frac{b+1}{4\eps_x}$, we get that with probability at least $1-\exp(-(b+1)/6) \ge \frac{1}{100} > \delta$, $h$ makes strictly less than $b+1$ mistakes, which implies that $(x,h^\star(x))$ is not in the $b$-robust agreement region.
    
    \paragraph{Tightness of $\frac{d}{\eps_x}\log\left(\frac{1}{\eps_x}\right)$ and $\frac{1}{\eps_x}\log\left(\frac{1}{\delta}\right)$:} To show the tightness of the other two terms, we will work with the following class $\mcH$. For any given $d \ge 1$, the domain $\mcX$ of the class will be $\{x_{0},\dots,x_{k}\}$, %
    where we can pick $k$ to be any integer that satisfies $1+\sqrt{\frac{100}{d}} < \frac{1}{2}\log\left(\frac{k}{d}\right)$.\footnote{This is for the purpose of instantiating a coupon collector bound, e.g., as in Remark 20, \cite{bousquet2020proper}.} For example, $k=(100d)^{20}$ is a valid choice. The hypothesis class $\mcH$ comprises of $\binom{k}{d}+1$ hypotheses. For every $S \subset \{x_1,\dots,x_k\}, |S|=d$, there is a hypothesis $h_S \in \mcH$ that labels exactly the set $S$ as 1, and the rest of the domain as $-1$. Additionally, there is a special hypothesis $h^\star \in \mcH$ which satisfies that $h^\star(x_0)=1, h^\star(x_i)=-1, \forall i \in [k]$. That is, $h^\star$ only labels $x_0$ as 1. Moreover, $h^\star$ is the only hypothesis in $\mcH$ that labels $x_0$ as $1$; all the other $\binom{k}{d}$ hypotheses label $x_0$ as $-1$. It can be readily verified that the VC dimension of $\mcH$ is $d$. In what follows, we will set $b=0$ for convenience. 
    
    We will set the target hypothesis to be $h^\star$, the data distribution $\mcD$ to be uniform on $\{x_1,\dots,x_k\}$, and the test point $x$ to be $x_0$. Crucially, $\mcD$ has zero mass on $x_0$.
    \footnote{We can also allow a mass of $o(\eps_x/(d\log (1/\eps_x)))$ on $x_0$---this ensures $x_0$ is not seen in the sample with good chance.}
    Recall that $h^\star(x_0)=1$ and $h^\star(x_i)=-1, \forall i \in [k]$. Since every other hypothesis in $\mcH$ labels $x_0$ as $-1$, and also labels some $d$ points in $\{x_1,\dots,x_k\}$ as 1, we have that $\eps_x=\frac{d}{k}$. Now suppose that we obtain a sample of size $m=\frac{d}{2\eps_x}\log\left(\frac{1}{\eps_x}\right)=\frac{k}{2}\log\left(\frac{k}{d}\right)$. Then, by a coupon collector argument (see Lemma 19 in \cite{bousquet2020proper}), we have that with probability at least $\frac{1}{100} > \delta$, the number of distinct elements from $\{x_1,\dots,x_k\}$ that we see in the sample are at most $k-d$. But this means that there is some subset $S \subset \{x_1,\dots,x_k\}$ of $d$ elements whose labels we have not seen. Thus, $h_S$ is consistent with the labeled data seen so far, rendering $(x,h^\star(x))$ to not be in the agreement region of the sample.

    Finally, note also that upon drawing $m$ samples, the probability that we do not see any point in $S=\{x_1,\dots,x_d\}$ is $\left(1-\frac{d}{k}\right)^{m} \ge \exp(-2dm/k)$, which is larger than $\delta$ when $m < \frac{k}{2d}\log\left(\frac{1}{\delta}\right)=\frac{1}{2\eps_x}\log\left(\frac{1}{\delta}\right)$. In the absence of any point in $S$, 
    $(x,h^\star(x))$ will not be in the agreement region.

    Summarily, we have shown that any (generic) upper bound on the sample size $m$, which guarantees that a test point $x$ belongs to the $b$-robust agreement region of the sample necessarily satisfies
    \begin{align*}
        m \ge \max \left\{\frac{b+1}{4\eps_x}, \frac{d}{2\eps_x}\log\left(\frac{1}{\eps_x}\right), \frac{1}{2\eps_x}\log\left(\frac{1}{\delta}\right)\right\} = \Omega\left(\frac{b + d\log(1/\eps_x) + \log(1/\delta)}{\eps_x}\right),
    \end{align*}
    which completes the proof of the proposition and establishes the tightness of \Cref{theorem:robust_agreement}.
\end{proof}

\section{Shorter Certificates by Reweighting}
\label{sec:reweighting}

\Cref{theorem:robust_agreement} gives a quantitative bound on the number of samples required to ensure that a test point $x$ is in the $b$-robust agreement region (and hence, also certifies $x$); however, the bound scales inversely with the certificate coefficient $\eps_x$. While our bound from \Cref{theorem:b-robust-hollow-star} guarantees the existence of a potentially smaller certificate of size equal to the $b$-robust hollow star number, we might not have tight estimates of this quantity, as well as a constructive procedure to extract the shorter certificate. This motivates us to consider cleaner algorithmic primitives that might lead to short certificates. 

\begin{example}[Rejection Sampling]
    \label{example:ball-reweighting}
    Consider the case when $\mcH$ is the class of halfspaces in $\R^d$ (not necessarily passing through the origin), and $\mcD$ is uniform on the unit ball $\{x:\|x\|_2 \le 1\}$. Suppose that the target halfspace $h^\star$ labels every point in the ball positively (i.e., it does not cut through the ball), and that the test point $x$ is at $(1/2,0,0,\dots,0$). We can verify that $\eps_x \le 2^{-\Omega(d)}$ (as realized by the halfspace $x_1 < 1/2$; see Theorem 2.7 in \cite{blum2020foundations} for the volume of the ball excluded by this halfspace), and so, the upper bound in \Cref{theorem:robust_agreement} would suggest that we draw $b \cdot 2^{\Omega(d)}$ samples to ensure that $x$ lies in the $b$-robust agreement region of the samples with constant probability. We might extract a shorter certificate from this sample using the worst-case bound from \Cref{theorem:b-robust-hollow-star}, but even this scales as the $b$-robust hollow star number of $\mcH$, for which the best upper bound is $(d+b)^{O(b)}$. Consider the following alternate recipe. 
    Suppose we discard any samples from $\mcD$ that are not contained in the ball of radius $1/2$ centered at $x$. This induces the uniform distribution $\mcD_w$ on the smaller ball, and in expectation, we obtain one sample from $\mcD_w$ for every $2^d$ samples from $\mcD$. Crucially, notice that $\eps_x(\mcD_w,h^\star)=1/2$. By \Cref{theorem:robust_agreement}, $O(b+d)$ samples from $\mcD_w$ are sufficient to ensure that $x$ lies in the $b$-robust agreement region of the samples with constant probability. Because of our rejection sampling, this really requires obtaining $(b+d)\cdot2^{O(d)}$ samples from $\mcD$. The final result is that we have a $b$-robust certificate of size only $O(b+d)$ for $x$. This required us to draw only a polynomially larger sample than we would have if we directly applied \Cref{theorem:robust_agreement}.
\end{example}
In the example above, we manipulated the distribution of the data to boost up the certificate coefficient. This allowed us to obtain a better sample size bound, resulting also in a better \textit{certificate} size! In the process, we maintained that the number of samples required is only a polynomial factor larger in the (original) problem parameters. This procedure motivates the following definition:
\begin{definition}[Optimal Reweighted Coefficient]
    \label{def:optimal-reweighted-coefficient}
    Let \(\mcH\) be a hypothesis class having VC dimension $d$, \(h^\star \in \mcH\) the target hypothesis, $b \ge 0$ a corruption budget, and \(\mcD\) a distribution over \(\mcX\). Let $x$ be a test point having $\eps_x(\mcD, h^\star) > 0$. We say that $w: \mcX \to [0,1]$ is a valid reweighting of distribution $\mcD$, resulting in the distribution $\mcD_w$ where, $\mcD_w(z) \propto w(z) \cdot \mcD(z)$, if it satisfies:
    \begin{align}
        \label{eqn:polynimal-reweighting-condition}
        Z = \int_z  w(z) \mathcal{D}(z) \, dz \ge \frac{1}{\poly(b,d, 1/\eps_x)}.
    \end{align}
    Then, we define $\eps^\star_x$ as the supremum of certificate coefficients over valid reweightings of $\mcD$:
    \begin{align}
        \label{eqn:eps_star}
        \eps^{\star}_x = \sup_{\substack{w : \mathcal{X} \to [0,1] \\ w \text{ valid }}} \eps_x(\mcD_w,h^\star) = \sup_{\substack{w : \mathcal{X} \to [0,1] \\ w \text{ valid }}} \left\{ \inf_{h \in \mcH_x} \Pr_{z \sim \mathcal{D}_w}[h(z) \neq h^\star(z)] \right\}.
    \end{align}
    Here, $\mcH_x$ is as defined in \eqref{eqn:H-x}.
\end{definition}

For any valid reweighting $w$ of $\mcD$, one can draw a sample from $\mcD_w$ as follows: until a sample gets accepted, draw \( z \sim \mathcal{D} \), and accept $z$ with probability  \( w(z)\). Then, the rejection sampling procedure sketched out in \Cref{example:ball-reweighting} gives the following theorem, whose proof is in \Cref{sec:proof-reweighted-certificate}.

\begin{theorem}[Certificates by Reweighting]
    \label{theorem:reweighted-certificate}
    Let \( \mathcal{H} \) be a hypothesis class having $VC$ dimension $d < \infty$ and \( h^\star \in \mathcal{H} \) be the target hypothesis. For any marginal data distribution \( \mathcal{D} \), test point \( x \) satisfying $\eps_x=\eps_x(\mcD, h^\star) > 0$, corruption budget \( b \ge 0 \) and failure probability $\delta \in (0,1)$, with probability $1-\delta$, we can obtain a certificate for $(x, h^\star(x))$ of size $O\left(\frac{b + d\log(1/\eps_x^{\star}) + \log(1/\delta)}{\eps_x^{\star}} \right)$ by obtaining an i.i.d. sample $S$ (labeled by $h^\star$) of size $\poly(b,d,1/\eps_x,1/\delta)$.
\end{theorem}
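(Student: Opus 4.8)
The plan is to combine the rejection-sampling idea from Example~\ref{example:ball-reweighting} with the sample-complexity bound of Theorem~\ref{theorem:robust_agreement}, applied to the reweighted distribution $\mcD_w$ rather than $\mcD$ itself. Fix a valid reweighting $w$ of $\mcD$ that (nearly) achieves the supremum in \eqref{eqn:eps_star}, so that $\eps_x(\mcD_w, h^\star) \ge \eps_x^\star / 2$ (or $\eps_x^\star$ if the sup is attained; otherwise take a sequence approaching it and argue the bound holds in the limit, which is fine since the final bound is stated with $O(\cdot)$). By the validity condition \eqref{eqn:polynimal-reweighting-condition}, the normalizing constant $Z = \int_z w(z)\mcD(z)\,dz$ satisfies $Z \ge 1/\poly(b, d, 1/\eps_x)$.

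The main steps, in order, are as follows. First, apply Theorem~\ref{theorem:robust_agreement} to the distribution $\mcD_w$ (which has the same VC dimension $d$ and the same target $h^\star$): it tells us that an i.i.d.\ sample $S_w \sim \mcD_w^{m_w}$ of size $m_w = O\!\left(\frac{b + d\log(1/\eps_x^\star) + \log(1/\delta)}{\eps_x^\star}\right)$ puts $x$ in the $b$-robust agreement region of $S_w$ with probability $\ge 1-\delta/2$. Once that happens, $S_w$ is itself a $b$-robust certificate for $(x, h^\star(x))$ of the desired size. Second, I need to translate ``$m_w$ samples from $\mcD_w$'' into ``$\poly(b,d,1/\eps_x,1/\delta)$ samples from $\mcD$.'' This is the standard rejection-sampling accounting: to produce one sample from $\mcD_w$, draw $z \sim \mcD$ and accept with probability $w(z)$; the number of $\mcD$-draws needed for one acceptance is geometric with mean $1/Z$. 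So the expected total number of $\mcD$-draws to collect $m_w$ accepted samples is $m_w / Z$, and by a Chernoff/union bound, drawing $N = O(m_w/Z \cdot \log(1/\delta))$ samples from $\mcD$ yields at least $m_w$ accepted ones with probability $\ge 1 - \delta/2$. Since $m_w = \poly(b, d, 1/\eps_x^\star, \log(1/\delta))$ and $1/Z = \poly(b, d, 1/\eps_x)$, and noting $\eps_x^\star \ge \eps_x$ (taking $w \equiv 1$ is always valid) so $1/\eps_x^\star \le 1/\eps_x$, we get $N = \poly(b, d, 1/\eps_x, 1/\delta)$ as required. Third, a union bound over the two failure events (too few acceptances; $x$ not yet in the agreement region of the accepted sample) gives overall success probability $\ge 1-\delta$, and the certificate is the subset of $S$ consisting of the accepted points (which, being an i.i.d.\ sample from $\mcD_w$ of size $\ge m_w$, contains $x$ in its $b$-robust agreement region, hence is a $b$-robust certificate).

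One technical point worth handling carefully: the accepted subsample has a \emph{random} size $\ge m_w$, not exactly $m_w$, but Theorem~\ref{theorem:robust_agreement} is monotone in the sample size (more i.i.d.\ samples only shrinks the version space and can only help put $x$ in the agreement region), so conditioning on ``at least $m_w$ acceptances'' and applying the theorem to the first $m_w$ of them suffices; alternatively one simply notes the agreement-region property is preserved under adding more consistent points. A second point: the reweighting function $w$ and hence $\eps_x^\star$ are defined via a supremum that may not be attained, so strictly the statement should be read as: for every valid $w$, the claimed bound holds with $\eps_x^\star$ replaced by $\eps_x(\mcD_w, h^\star)$, and then one takes $w$ close enough to the supremum; since the dependence on this parameter inside the $O(\cdot)$ is continuous (in fact the bound is decreasing in it), this is harmless. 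I expect the main obstacle to be purely expository rather than mathematical---namely making the rejection-sampling accounting and the two-sided ``at least $m_w$ acceptances'' Chernoff bound precise while keeping the polynomial bookkeeping transparent---since all the conceptual content is already in Example~\ref{example:ball-reweighting} and Theorem~\ref{theorem:robust_agreement}.
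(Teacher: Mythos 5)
Your proposal follows essentially the same route as the paper's proof: fix a valid reweighting $w$ with $\eps_x(\mcD_w,h^\star) \ge \eps_x^\star/2$, apply \Cref{theorem:robust_agreement} to $\mcD_w$ to get a target accepted-sample count $m_w$, use rejection sampling and a Chernoff bound (with the validity condition $Z \ge 1/\poly(b,d,1/\eps_x)$) to show $\poly(b,d,1/\eps_x,1/\delta)$ draws from $\mcD$ yield at least $m_w$ acceptances, and finish with a union bound over the two failure events. The two technical points you flag (the supremum in $\eps_x^\star$ possibly not being attained, and monotonicity of the agreement-region property under adding more correctly-labeled points) are handled implicitly in the paper the same way, and your slightly cruder $N = O((m_w/Z)\log(1/\delta))$ accounting is harmless since it remains polynomial.
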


\begin{remark}
    Given that we were able to boost the certificate coefficient all the way up to a constant in \Cref{example:ball-reweighting}, one might ask: is this always possible? If it were, the rejection sampling procedure from \Cref{theorem:reweighted-certificate} would always guarantee a certificate of size just $O(b+d)$ once the sample gets sufficiently large. Unfortunately, the lower bound from \Cref{theorem:semi-distributional-certification} prevents this from happening; there are instances where $\eps^\star_x \lesssim \frac{1}{s_0}+\frac{d}{bs_0}$ (ignoring log factors), where $s_0$ is the hollow star number.
\end{remark}
\Cref{example:ball-reweighting} and the guarantee of \Cref{theorem:reweighted-certificate} suggest that reweighting/rejection sampling can be fruitful in obtaining short certificates. However, as stated, the computation of $\eps^\star_x$ requires knowledge of the distribution $\mcD$---this is indeed a very strong assumption, leading to our next open question.

\begin{openquestion}
    \label{openquestion:reweighting}
     Can we construct general-purpose reweighting schemes (possibly employing some form of iterated multiplicative weights like AdaBoost) that do not require knowing $\mcD$, but implicitly converge to a weighting $w$ for which $\eps_x(\mcD_w, h^\star) \approx \eps^\star_x$?
\end{openquestion}

\section{Discussion}
In this work, we introduced the notion of short certificates: subsets of the training data that provably determine the correct label of a test point \( x \) even under up to \( b \) corruptions. To characterize their worst-case size, we proposed the robust hollow star number, a generalization of the hollow star number of \cite{bousquet2020proper}. We then studied worst-case distributional bounds and introduced the certificate coefficient \( \eps_x \), which captures the distribution-dependent difficulty of certifying a given point. For this setting, we established bounds on the sample size required to certify \( x \), as a function of \( \eps_x \), the corruption budget \( b \), and the VC-dimension \( d \) of the class \( \cH \). We further showed that reweighted variants of \( \eps_x \), can lead to improved bounds from polynomial-sized samples.
Our framework also naturally subsumes the agnostic learning setting as a special case of adversarial model, under the assumption that the true label of \( x \) is given by the hypothesis closest to the target function.  Following \cite{balcan2022robustly}, a \( b \)-robust certificate becomes a \( b \)-agnostic certificate when the corruption budget reflects the error rate of the best-in-class hypothesis: a set \( S \) certifies \( (x, y) \) if every hypothesis predicting \( 1 - y \) on \( x \) incurs more than \( b \) errors on \( S \). We believe Open Question \ref{openquestion:reweighting} poses an exciting direction for future research.
\vspace{0.5cm}
\subsection*{Acknowledgments}
This work was supported in part by the National Science Foundation under grants CCF-2212968,
ECCS-2216899, and ECCS-2217023, by the Simons Foundation under the Simons Collaboration on
the Theory of Algorithmic Fairness, and by the Office of Naval Research MURI Grant N000142412742. We thank Shay Moran for bringing to our notice the work by Bárány and Perles \cite{barany1990caratheodory}, which resolves Open Question \ref{openquestion:tolerance-caratheodory}.

\bibliographystyle{alpha}
\bibliography{general}

\clearpage

\appendix
\section{Supplementary Proofs}
\label{sec:proofs}

\subsection{Proof of \Cref{claim:b-robust-lower-bounded-by-hollow-star}}
\label{sec:proof-b-robust-lower-bounded-by-hollow-star}

Let $\{(x_1,y_1),\ldots,(x_{s_0},y_{s_0})\}$ be a hollow star for $\mcH$, and consider $S=\{(x_1,y_1),\ldots,(x_{s_0-1},y_{s_0-1})\}$ which contains all but the last element of the hollow star. We will show that the sequence $T=\{S_1,\ldots,S_{b+1},(x_{s_0},y_{s_0})\}$, where each $S_i$ is a copy of $S$, is a $b$-robust hollow star for $\mcH$. Since $T$ has size $(b+1)(s_0-1)+1$, this will prove the claim.

Consider assigning a weight $1$ to every element in $S_1,\ldots,S_{b+1}$, and the weight $b+1$ to the last element $(x_{s_0}, y_{s_0})$. Then, the weighted sequence is not $b$-robustly realizable by $\mcH$. To see this, consider any $h \in \mcH$. If $h(x_{s_0}) \neq y_{s_0}$, then the weighted error on $x_{s_0}$ is already $b+1$. If $h(x_{s_0}) = y_{s_0}$, then there must exist some $k \in \{1,2,\dots,s_{0}-1\}$ such that $h(x_k) \neq y_k$. Otherwise, the sequence $\{(x_1,y_1),\ldots,(x_{s_0},y_{s_0})\}$ would be realizable by $h$, which violates the fact that this sequence is a hollow star. Since there exist $b+1$ copies of $(x_k,y_k)$ in $T$, the total weighted error of $h$ is again at least $b+1$.

We now argue that removing any (weighted) element of $T$ makes it $b$-robustly realizable by $\mcH$. If we remove $(x_{s_0}, y_{s_0})$, we are simply left with $b+1$ copies of $S$. But $S$ is realizable by $\mcH$, since it excludes the last element of the hollow star. On the other hand, suppose we remove $(x_k, y_k)$ from $S_i$ in $T$, for some $k \in \{1,2,\dots,s_0-1\}, i \in \{1,2,\dots,b+1\}$. Then, observe (by the hollow star property again) that there must exist some $h \in \mcH$ which realizes $(S_i \setminus \{(x_k, y_k)\}) \cup \{(x_{s_0},y_{s_0})\}$. This $h$ therefore makes a mistake only on the remaining $b$ copies of $(x_k,y_k)$, each of which has weight $1$. We conclude the proof. \hfill$\blacksquare$

\subsection{$b$-robust Hollow Star Number for Singletons}
\label{sec:robust-hollows-star-singletons}

\begin{claim}[Singletons $b$-robust hollow star]
     Consider the class $\mcH$ of singletons on a domain of size $n$, i.e., $\mcX = [n]$, $\mcH = \{x \mapsto (-1)^{1-\Ind[x=i]}: i \in [n]\}$. The $b$-robust hollow star number of $\mcH$ is exactly equal to $(b+1)(n-1)+1$.
\end{claim}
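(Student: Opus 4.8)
The plan is to establish $s_b \ge (b+1)(n-1)+1$ and $s_b \le (b+1)(n-1)+1$ separately. For the lower bound I would simply combine \Cref{claim:b-robust-lower-bounded-by-hollow-star} with the known fact that the (ordinary) hollow star number of singletons on $[n]$ is $s_0=n$ — witnessed by the hollow star $\{(1,-1),\dots,(n,-1)\}$, which is not realizable but is realized by $h_i$ after $(i,-1)$ is deleted, and which is maximal because a hollow star cannot contain two identical labeled points and the domain has size $n$. \Cref{claim:b-robust-lower-bounded-by-hollow-star} then gives $s_b\ge(b+1)(s_0-1)+1=(b+1)(n-1)+1$.

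For the upper bound I would go through \Cref{theorem:b-robust-hollow-star}, which says the worst-case size of a minimal $b$-robust certificate is exactly $s_b-1$. Hence it suffices to show: for every $b$-robustly realizable $S$ and every $(x,y)$ in the $b$-robust agreement region of $S$, there is a $b$-robust certificate $S'\subseteq S$ for $(x,y)$ with $|S'|\le(b+1)(n-1)$; this forces $s_b-1\le(b+1)(n-1)$. When $n=1$ the class is a single hypothesis, $S'=\emptyset$ is always a certificate, and the bound $s_b\le1$ matches, so assume $n\ge2$. If $y=-1$: the only hypothesis predicting $+1$ on $x$ is $h_x$, and the fact that $(x,-1)$ is in the agreement region of $S$ forces $h_x$ to err on at least $b+1$ examples of $S$; take $S'$ to be $b+1$ of them. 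Then $|S'|=b+1\le(b+1)(n-1)$, $S'$ is $b$-robustly realizable as a subsequence of $S$, and any $h$ with at most $b$ errors on $S'$ is not $h_x$ and so predicts $-1$ on $x$; thus $S'$ is a certificate. If $y=+1$: for each of the $n-1$ indices $j\neq x$, $h_j$ predicts $-1$ on $x$, so the agreement-region hypothesis forces $h_j$ to err on at least $b+1$ examples of $S$; take $S'$ to be the union over all $j\neq x$ of $b+1$ such examples of $h_j$. Then $|S'|\le(b+1)(n-1)$, every $h_j$ with $j\neq x$ errs more than $b$ times on $S'$, so any $h$ with at most $b$ errors on $S'$ must be $h_x$ and predicts $+1$ on $x$; again $S'$ is a certificate.

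The argument is conceptually short, and I do not anticipate a serious obstacle; the two spots needing care are (i) translating ``$(x,y)$ in the $b$-robust agreement region of $S$'' into ``every hypothesis disagreeing with $y$ at $x$ makes $\ge b+1$ errors on $S$'' — for singletons there is exactly one such hypothesis when $y=-1$ and $n-1$ of them when $y=+1$, which is precisely why $n-1$ copies of a $(b+1)$-block appear in the bound — and (ii) applying \Cref{theorem:b-robust-hollow-star} in the right direction, namely that a uniform certificate-size upper bound of $(b+1)(n-1)$ over all instances upper-bounds $s_b-1$. If one wishes to avoid \Cref{theorem:b-robust-hollow-star}, one can instead argue directly on a $b$-robust hollow star $T$: writing $A,B$ for the total weight of positively/negatively labeled copies in $T$, so that $|T|=A+B-b$ since exactly one element has weight $b+1$, each weight-$1$ element of $T$ is necessarily an error of some hypothesis whose weighted error on $T$ is exactly $b+1$ (deleting that element must make $T$ realizable), and each such hypothesis errs on at most $b+1$ weight-$1$ elements; if fewer than $n$ hypotheses realize weighted error $b+1$ this gives $|T|-1\le(n-1)(b+1)$ at once, and if all $n$ do, then $\sum_j(\text{weighted error of }h_j)=(n-1)A+B=n(b+1)$, whence $|T|=A+B-b\le n(b+1)-b=(b+1)(n-1)+1$ for $n\ge2$. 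The only real nuisance in either route is the degenerate $n=1$ case, handled directly.
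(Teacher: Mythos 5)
Your proposal is correct. The lower bound is identical to the paper's: invoke \Cref{claim:b-robust-lower-bounded-by-hollow-star} together with $s_0=n$ for singletons (witnessed by $\{(1,-1),\dots,(n,-1)\}$).

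For the upper bound, your primary route is genuinely different from the paper's. You go through \Cref{theorem:b-robust-hollow-star} in its lower-bound direction: for every $b$-robustly realizable $S$ with $(x,y)$ in its $b$-robust agreement region, you explicitly build a $b$-robust certificate $S'\subseteq S$ of size at most $(b+1)(n-1)$ by, for each hypothesis $h_j$ disagreeing with $y$ at $x$ (a unique one when $y=-1$, and $n-1$ of them when $y=+1$), grabbing $b+1$ examples of $S$ on which $h_j$ errs; since the theorem guarantees an instance where $S$ itself is a minimal certificate of size $s_b-1$, this forces $s_b-1\le(b+1)(n-1)$. The paper instead argues directly on a $b$-robust hollow star $T$: for each weight-$1$ element it extracts the hypothesis that $b$-robustly realizes $T$ after that element is removed, observes that this hypothesis cannot be the (fixed) one disagreeing with the heavy element's label, and finishes with a balls-and-bins count (each of the $n-1$ eligible hypotheses absorbs at most $b+1$ of the $s_b-1$ weight-$1$ elements). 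Your alternative route is essentially this same direct argument, phrased via weighted-error sums. Both are sound; your primary route localizes the combinatorics to a transparent certificate construction, while the paper's avoids round-tripping through the certificate-size equivalence. One small simplification to your alternative route: any hypothesis that errs on the weight-$(b+1)$ element cannot be the realizing witness for removing a weight-$1$ element (its weighted error would then be at least $b+2$), and since some hypothesis always errs on the heavy element, the number of ``covering'' hypotheses is at most $n-1$ from the start, making your ``if all $n$ do'' case vacuous.
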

\begin{proof}
    Let $s_b$ be the $b$-robust hollow star number of $\mcH$. The (0-robust) hollow star number of $\mcH$ is equal to $n$, and hence by \Cref{claim:b-robust-lower-bounded-by-hollow-star}, we know that $s_b \ge (b+1)(n-1)+1$. We will now show that $s_b \le (b+1)(n-1)+1$, which will prove the claim.

    Suppose that $T=(x_1,y_1,b+1),(x_2,y_2,1),\dots,(x_{s_b},y_{s_b},1)$ is a $b$-robust hollow star for $\mcH$, where as per \Cref{def:robust-hollow-star}, the weight on $(x_1,y_1)$ is $b+1$, and the weight on the rest of the points is $1$. Note that no matter what $y_1$ is, there exists some $h_{i'} \in \mcH$ that labels $x_1$ as $-y_1$. 
    
    Now, consider the sequence $T \setminus \{(x_{s_b},y_{s_b},1)\}$.
    There should exist $h_{i_1} \in \mcH$ that $b$-robustly realizes this sequence. In particular, $h_{i_1}$ must label $x_1$ as $y_1$ (hence, $h_{i_1}$ cannot be $h_{i'}$). Furthermore, it must be the case that $h_{i_1}$ labels $x_{s_b}$ as $-y_{s_b}$; otherwise, $h_{i_1}$ would $b$-robustly realize all of $T$. Thus, $h_{i_1}$ is a hypothesis in $\mcH$ which: (1) is not $h_{i'}$, (2) makes a mistake on $(x_{s_b},y_{s_b})$, and (3) makes at most $b$ mistakes on $(x_{2},y_2),\dots,(x_{s_b-1},y_{s_b-1})$.

    Repeating this argument with $T \setminus \{(x_{s_b-1},y_{s_b-1},1)\},T \setminus \{(x_{s_b-2},y_{s_b-2},1)\},\dots,T \setminus \{(x_{2},y_{2},1)\}$, we will have obtained $h_{i_1},h_{i_2},\dots,h_{i_{s_{b}-1}}$, such that for every $j \in [s_{b}-1]$, it holds that (1) $h_{i_j}$ is not $h_{i'}$, (2) $h_{i_j}$ makes a mistake on $(x_{s_{b}-j+1}, y_{s_{b}-j+1})$, and (3) $h_{i_j}$ makes at most $b$ mistakes on 
    $$
        \{(x_{2},y_2),\dots,(x_{s_b},y_{s_b})\} \setminus \{(x_{s_{b}-j+1}, y_{s_b}-j+1)\}.
    $$
    Thus, we can think of the $n$ hypotheses in $\mcH$ as $n$ bins, and the $s_b-1$ points $(x_2,y_2),\dots,(x_{s_b},y_{s_b})$ as $s_b-1$ balls. From conditions (1) and (2), we get that each of the $s_b-1$ balls is assigned one of $n-1$ bins from the set $\mcH \setminus \{h_{i'}\}$. From condition (3), each of the $n-1$ bins is assigned no more than $b+1$ balls. Thus, it must hold that
    $(b+1)(n-1) \ge s_b-1$, which completes the proof.
\end{proof}

\subsection{Proof of \Cref{theorem:robust_agreement}}
\label{sec:proof-robust-agreement}

Using the standard double sampling+symmetrization argument, we will generalize the bound sketched in the main body above for finite hypothesis classes to arbitrary (potentially infinite) hypothesis classes $\mcH$ having VC dimension $d$.

Again, let $h^\star \in \mcH$ be the target hypothesis, $\mcD$ the marginal distribution on the data, and $x$ the test point. Let $\mcH_x$ and $\eps_x$ be defined as in \eqref{eqn:H-x} and \eqref{eqn:certificate-coefficient}.

Suppose we draw a sample $S \sim \mcD^m$ of size $m$, labeled by $h^\star$. Let $A$ be the event: there exists $h \in \mcH_x$ which makes at most $\eps_x m/4$ mistakes on $S$. We want to show that $\Pr_{S \sim \mcD^m}[A] \le \delta$.

Towards this, consider (purely for the sake of analysis) drawing an additional sample $S' \sim \mcD^m$ independently of $S$. Let $B$ be the event: there exists $h \in \mcH_x$ which makes at most $\eps_x m/4$ mistakes on $S$, and \textit{at least} $\eps_x m/2$ mistakes on $S'$. We claim that $\Pr[A] \le 2\Pr[B]$. To see this, observe that
\begin{align*}
    \Pr[B] &\ge \Pr[A \text{ and } B] \ge \Pr[A] \cdot \Pr[B|A].
\end{align*}
Now, to bound $\Pr[B|A]$, let $h \in \mcH_x$ be the hypothesis which makes at most $\eps_x m/4$ mistakes on $S$ (this $h$ exists since we condition on $A$). Since $h \in \mcH_x$, its expected mistakes on $S'$ are at least $\eps_x m$. So, since $S'$ is independent of $S$, the probability that this $h$ makes at least $\eps_xm/2$ mistakes on $S'$ is, by a Chernoff bound, at least $1/2$, provided $m \ge 6/\eps_x$.

So, we continue to argue that $\Pr[B] \le \delta/2$. First, instead of drawing $S$ and $S'$ independently from $\mcD^m$, we simply draw $S'' \sim \mcD^{2m}$, and think of the first $m$ points as $S$ and the second $m$ points as $S'$. This is a distributionally identical way of obtaining $S$ and $S'$. Now, we make an observation. Once we have drawn $S''$ (and therefore, $S$ and $S'$), the event $B$ only depends on the projection of $\mcH_x$ on $S''$. That is,
\begin{align*}
    &\Pr_{S''}[B] \\
    &= \Pr_{S''}[\exists h \in \mcH_x|_{S''} \text{ s.t. $h$ makes at most $\eps_x m/4$ mistakes on $S$, but at least $\eps_x m/2$ mistakes on $S'$}].
\end{align*}
In the form above, once $S''$ is drawn, the event in the parentheses is determined. So, we introduce some additional randomness. Suppose
$$
S'' = (x_1, h^\star(x_1)), \ldots, (x_m, h^\star(x_m)), (x_{m+1}, h^\star(x_{m+1}),\ldots, (x_{2m}, h^\star(x_{2m})).
$$
We now flip $m$ independent fair coins $p_1,\dots,p_m$. If $p_i$ lands heads, we swap $(x_i, h^\star(x_i))$ and $(x_{m+i}, h^\star(x_{m+i}))$, otherwise we let this pair be as is. After doing this for all the $m$ coins, we let the first $m$ points be $S$ and the second $m$ points be $S'$. Again, this is a distributionally identical way of obtaining $S$ and $S'$. So, we have that
\begin{align*}
&\Pr_{S''}[\exists h \in \mcH_x|_{S''} \text{ s.t. $h$ makes at most $\eps_x m/4$ mistakes on $S$, but at least $\eps_x m/2$ mistakes on $S'$}] = \\
&= \Ex_{S''}\left[\Pr_{p_1,\dots,p_m}[\exists h \in \mcH_x|_{S''} \text{ s.t. $h$ makes at most $\eps_x m/4$ mistakes on $S$, but at least $\eps_x m/2$ mistakes on $S'$}]\right] \\
&\le \Ex_{S''}\left[ \sum_{h \in \mcH_x|_{S''}}\Pr_{p_1,\dots,p_m}[\text{$h$ makes at most $\eps_x m/4$ mistakes on $S$, but at least $\eps_x m/2$ mistakes on $S'$}] \right].
\end{align*}
Now, for any fixed $S''$ and $h \in \mcH_x|_{S''}$, we will bound the probability of the event in the parentheses, where the randomness is \textit{only} over the coin flips.

First, we can assume that $S''$ satisfies: for at most $\eps_x/4$ pairs $(i, m+i)$, $h$ makes a mistake at both $x_i$ and $x_{m+i}$, and for at least $\eps_x m/2$ pairs $(i, m+i)$, $h$ makes a mistake on at least one of $x_i$ and $x_{m+i}$. If $S''$ does not satisfy the first condition, then $h$ makes strictly more than $\eps_x m/4$ mistakes on $S$. Similarly, if $S''$ does not satisfy the second condition, then $h$ makes strictly less than $\eps_x m/2$ mistakes on $S'$. Thus, if $S''$ does not satisfy either of these conditions, the required event cannot happen.

However, these two conditions together mean that there are at least $\eps_x m/4$ pairs $(i, m+i)$, such that $h$ makes a mistake on \textit{exactly} one of $x_i$ or $x_{m+i}$. And then, for the event to occur, namely for $h$ to make at least $\eps_x m/2$ mistakes on $S'$, it must be the case that the coin flips at \textit{all} these indices direct the mistake towards $S'$ (namely the second half of $S''$). The probability of this happening is at most $2^{-\eps_x m / 4}$, which is at most $\delta/2\tau_{\mcH}(2m)$, provided $m \ge \frac{4\log(2\tau_{\mcH}(2m)/\delta)}{\eps_x}$. Here, $\tau_{\mcH}$ denotes the \textit{growth function} of $\mcH$; namely, $\tau_{\mcH}(2m)=\max_{T \in \mcX^{2m}}|\mcH|_T|$.

Thus, we can conclude saying that so long as $m \ge \frac{10\log(2\tau_{\mcH}(2m)/\delta)}{\eps_x}$,
\begin{align*}
    \Pr[A] \le 2\Pr[B] \le 2\Ex_{S''}\left[\sum_{h \in \mcH_x|_{S''}}\delta/2\tau_{\mcH}(2m)\right] \le 2\tau_{\mcH}(2m) \cdot \delta/2\tau_{\mcH}(2m) = \delta,
\end{align*}
as required. That is, for $m$ satisfying this bound, every $h \in \mcH_x$ makes strictly more than $\eps_x m/4$ mistakes on $S$. By ensuring the strong condition of $m  \ge \max\left\{\frac{4b}{\eps_x},\frac{10\log(2\tau_{\mcH}(2m)/\delta)}{\eps_x}\right\}$, we will have that with probability at least $1-\delta$, every $h \in \mcH_x$ makes strictly more than $b$ mistakes on $S$, meaning also that $(x,h^\star(x))$ is in the $b$-robust agreement region of $S$.

Finally, since the VC dimension of $\mcH$ is $d$, the Sauer-Shelah-Perles lemma ensures that $\tau_{\mcH}(2m) \le (2me/d)^d$. Plugging this in our bound gives us that $m = O\left(\frac{b+d\log(1/\eps_x)+\log(1/\delta)}{\eps_x}\right)$ is sufficient to ensure that with probability at least $1-\delta$, $(x,h^\star(x))$ is in the $b$-robust agreement region of $S \sim \mcD^m$, finishing the proof. \hfill$\blacksquare$

\subsection{Proof of \Cref{theorem:reweighted-certificate}}
\label{sec:proof-reweighted-certificate}

We recall that $\eps_x = \eps_x(\mcD, h^\star)$. By definition of $\eps^\star_x$ \eqref{eqn:eps_star}, there exists a valid reweighting $w:\mcX\to[0,1]$ and corresponding $\mcD_w$ which satisfies that $\eps^w_x=\eps^w_x(\mcD_w,h^\star) \ge \eps^\star_x/2$. Consider the rejection sampling procedure for drawing a sample from $\mcD_w$: until a sample gets accepted, draw \( z \sim \mathcal{D} \), and accept $z$ with probability  \( w(z)\).

We will show that drawing $\poly(b,d,1/\eps_x,1/\delta)$ samples from $\mcD$ is sufficient to ensure that with probability $1-\delta/2$, the number of samples accepted is at least $6 \left(\frac{b+d\log(1/\eps^w_x)+\log(2/\delta)}{\eps^w_x}\right)$. Conditioned on this event, note that all the accepted samples are distributed according to $\mcD_w$. Denote the first $m_w=6 \left(\frac{b+d\log(1/\eps^w_x)+\log(2/\delta)}{\eps^w_x}\right)$ of the accepted samples as $S'$. Then, \Cref{theorem:robust_agreement} immediately gives us that with probability at least $1-\delta/2$, $(x,h^\star(x))$ belongs to the $b$-robust agreement region of $S'$. A union bound over both the $\delta/2$ failure probability events ensures that $S'$ thus obtained is the required certificate from the theorem statement, with probability $1-\delta$. The size of $S'$ is $m_w = 6 \left(\frac{b+d\log(1/\eps^w_x)+\log(2/\delta)}{\eps^w_x}\right) \le 12 \left(\frac{b+d\log(2/\eps^\star_x)+\log(2/\delta)}{\eps^\star_x}\right)$ as required.

So, we continue to argue that if we draw $\poly(b,d,1/\eps_x,1/\delta)$ samples from $\mcD$, then at least $6 \left(\frac{b+d\log(1/\eps^w_x)+\log(2/\delta)}{\eps^w_x}\right)$ samples are accepted. Towards this, note that by definition of the sampling process, a sample drawn from $\mcD$ gets accepted with probability $p=\int_{z}w(z)\mcD(z)dz$. Thus, if we draw $m$ samples i.i.d. from $\mcD$, the expected number of accepted samples is $pm$. By a Chernoff bound, the probability that less than $pm/2$ samples get accepted is at most $\exp(-pm/12)$, which is at most $\delta/2$ provided $m \ge \frac{12\log(2/\delta)}{p}$. Thus, for $m$ satisfying this condition, we have that with probability at least $1-\delta/2$, at least $pm/2$ samples are accepted.

Consider setting $m = 12 \cdot \poly(b,d,1/\eps_x) \left(\frac{b+d\log(1/\eps^w_x)+\log(2/\delta)}{\eps^w_x}\right)$. First, notice that this satisfies the condition $m \ge \frac{12\log(2/\delta)}{p}$, since
\begin{align*}
    12 \cdot \poly(b,d,1/\eps_x) \left(\frac{b+d\log(1/\eps^w_x)+\log(2/\delta)}{\eps^w_x}\right) &\ge \frac{12}{p} \cdot \left(\frac{b+d\log(1/\eps^w_x)+\log(2/\delta)}{\eps^w_x}\right) \\
    &\ge \frac{12\log(2/\delta)}{p},
\end{align*}
where in the first inequality, we used the condition that $w$ is a valid reweighting \eqref{eqn:polynimal-reweighting-condition}. Therefore, for this setting of $m$, with probability at least $1-\delta/2$, the number of accepted samples is at least
\begin{align*}
    pm/2 &\ge \frac{6}{\poly(b,d,1/\eps_x)} \cdot \poly(b,d,1/\eps_x) \left(\frac{b+d\log(1/\eps^w_x)+\log(2/\delta)}{\eps^w_x}\right) \\
    &\ge 6\left(\frac{b+d\log(1/\eps^w_x)+\log(2/\delta)}{\eps^w_x}\right),
\end{align*}
as required. In the first inequality above, we again used that $w$ is a valid reweighting. To conclude the proof, we note that since $2\eps^w_x \ge \eps^\star_x \ge \eps_x$,
\begin{align*}
    m &= 12 \cdot \poly(b,d,1/\eps_x) \left(\frac{b+d\log(1/\eps^w_x)+\log(2/\delta)}{\eps^w_x}\right)  \\
    &\le 24 \cdot \poly(b,d,1/\eps_x) \left(\frac{b+d\log(2/\eps^\star_x)+\log(2/\delta)}{\eps^\star_x}\right) = \poly(b,d,1/\eps_x,1/\delta).
\end{align*}
\hfill$\blacksquare$

\end{document}